\documentclass[10pt,letterpaper]{uconn-minds}

\usepackage{amsmath,amsfonts,bm}

\def\eqref#1{equation~\ref{#1}}
\def\1{\bm{1}}

\DeclareMathAlphabet{\mathsfit}{\encodingdefault}{\sfdefault}{m}{sl}
\SetMathAlphabet{\mathsfit}{bold}{\encodingdefault}{\sfdefault}{bx}{n}

\newcommand{\E}{\mathbb{E}}

\newcommand{\KL}{D_{\mathrm{KL}}}
\newcommand{\Var}{\mathrm{Var}}

\DeclareMathOperator*{\argmax}{arg\,max}

\usepackage{algorithm}
\usepackage{multirow}
\usepackage{tabularx}
\usepackage{wrapfig}
\newcommand{\alignedwrapgraphic}[2][]{%
  \raisebox{\dimexpr\ht\strutbox-\height\relax}{%
    \includegraphics[width=\linewidth,#1]{#2}}}
\usepackage{needspace}
\usepackage{bm}
\newcommand{\ms}[2]{#1{\scriptsize$\pm$#2}}   % mean +- std
\definecolor{dndelta}{RGB}{176,52,52}
\definecolor{updelta}{RGB}{38,115,68}
\colorlet{algorithmink}{uconnBlue}
\colorlet{algorithmwash}{abstractBlue}

\usepackage{xspace}
\newcommand{\methodname}{\textsc{CLEAT}\xspace}
\graphicspath{{figures/}}
\usepackage{amsmath,amssymb,amsthm}

\newcommand{\PASS}{\textsc{pass}}
\newcommand{\QUERY}{\textsc{query}}
\newcommand{\RQV}{\textsc{RQV}}

\newcommand{\IG}{\mathrm{IG}}

\colorlet{thmbg}{abstractBlue}
\colorlet{thmink}{uconnBlue}
\colorlet{ourrow}{abstractBlue}
\tcbset{mindsbox/.style={
  enhanced jigsaw, frame hidden,
  colback=thmbg, coltext=paperText,
  colbacktitle=thmbg, coltitle=thmink,
  boxrule=0pt, titlerule=0pt, arc=10pt, outer arc=10pt,
  left=9pt, right=9pt, top=6pt, bottom=6pt,
  before skip=9pt, after skip=9pt},
  thmbox/.style={mindsbox, breakable}}

\newtcolorbox{promptbox}[1]{
  mindsbox, breakable,
  fonttitle=\small\normalfont\bfseries, title={#1},
  fontupper=\small\ttfamily, fontlower=\small\ttfamily,
  before upper={\raggedright\setlength{\parindent}{0pt}\setlength{\parskip}{4pt}},
  before lower={\raggedright\setlength{\parindent}{0pt}\setlength{\parskip}{4pt}},
  toptitle=5pt, bottomtitle=5pt,
  segmentation style={solid, draw=uconnBlue!20!white, line width=0.3pt}}
\newcommand{\promptslot}[1]{{\color{thmink}\{#1\}}}

\newtcolorbox{casebox}[1]{
  mindsbox,
  fonttitle=\small\normalfont\bfseries, title={#1},
  fontupper=\small,
  before upper={\setlength{\parindent}{0pt}\setlength{\parskip}{5pt}},
  toptitle=5pt, bottomtitle=5pt}

\newtheoremstyle{boxthm}%
  {0pt}{0pt}%                       space above / below
  {\normalfont}%                    body font
  {0pt}%                            indent
  {\bfseries\color{thmink}}%
  {.}%                              punctuation after head
  {.5em}%                           space after head
  {\thmname{#1}\thmnumber{ #2}\thmnote{ \bfseries(#3)}}
\theoremstyle{boxthm}
\newtheorem{proposition}{Proposition}

\tcolorboxenvironment{proposition}{thmbox}

\title{CLEAT: Learning Value-Guided Clarification\\for Interactive Agents}

\author{
    Jiatan Huang\textsuperscript{1}\quad
    Zheyuan Zhang\textsuperscript{2}\quad
    Ziming Li\textsuperscript{1}\quad
    Tianyi Ma\textsuperscript{2} \\
    Mingchen Li\textsuperscript{3}\quad
    Yanfang Ye\textsuperscript{2}\quad
    Chuxu Zhang\textsuperscript{1,$\dagger$}
}
\affiliation{
    \textsuperscript{1}University of Connecticut\quad
    \textsuperscript{2}University of Notre Dame \\
    \textsuperscript{3}University of Massachusetts, Amherst \\[2pt]
    \texttt{\{jiatan.huang,chuxu.zhang\}@uconn.edu}\quad
    \textsuperscript{$\dagger$}Corresponding author
}

\paperabstract{LLM agents can execute increasingly complex tasks, but effective assistance
also requires aligning their actions with goals and constraints that users
may leave unstated. We study this problem as \emph{interactive intent
alignment} through clarification.
Yet clarification is useful only when the agent can turn the answer into
better decisions. As training changes how the agent handles incomplete
information and uses additional answers, the same clarification can become
more or less useful.
We introduce
\textbf{\methodname} (\textbf{CL}arification through valu\textbf{E}-guided
\textbf{A}gent co-\textbf{T}raining), a framework that co-trains a
clarification-value controller and the task policy from shared interaction
outcomes. We formulate \emph{residual query value} to measure the task
utility gained by asking rather than proceeding under the current policy,
and estimate it through paired rollouts from the same interaction history.
These paired returns train the controller to predict clarification value
and the task policy to select useful questions and act on the answers. 
The controller guides subsequent rollout collection, while outcomes
from the updated agent provide new value labels that allow clarification
guidance to adapt as the agent learns.
With Qwen3-4B, \methodname
achieves the highest mean score in all ten evaluation settings across four
benchmarks. Gains over the strongest baseline reach $13.2$ and $6.5$
percentage points on UserGym Travel and IN3 underspecification judgment
accuracy, respectively. Ablations show that co-training improves the task policy even
when the controller is removed at inference, while adapting clarification
guidance to the evolving policy yields further gains.
}
\hypersetup{
    pdftitle={CLEAT: Learning Value-Guided Clarification for Interactive Agents},
    pdfauthor={Jiatan Huang, Zheyuan Zhang, Ziming Li, Tianyi Ma, Mingchen Li, Yanfang Ye, Chuxu Zhang}}

\begin{document}

\addtocontents{toc}{\protect\setcounter{tocdepth}{-1}}

\maketitle
\section{Introduction}
\label{sec:introduction}

Large language model (LLM) agents have made rapid progress in planning and
tool use, enabling them to carry out increasingly complex
tasks~\citep{zhou2024webarenarealisticwebenvironment,yang2024sweagentagentcomputerinterfacesenable,li2026graph}.
However, stronger execution alone does not ensure that an agent's actions
reflect what the user actually wants. In practice, users may leave important
constraints unstated, rely on mistaken assumptions, or struggle to articulate
their preferences~\citep{qian2024tellmoreimplicituser,vijayvargiya2026ambigsweinteractiveagentsovercome},
creating a gap between their explicit requests and intended goals.
Effective assistance therefore requires agents to help close this gap
through interaction, rather than expect users to specify everything
upfront. We study this problem as \emph{interactive intent alignment},
where agents align their task decisions with the user's intended goals
and constraints through interaction.

Clarification can support this alignment by resolving uncertainty about the
user's goals and constraints.\footnote{We use \emph{clarification} broadly
to refer to questions an agent asks before acting, whether to fill in
missing information, resolve ambiguity, or challenge a questionable
premise.} Yet clarification can reveal more about the user without
improving the agent's decisions, while still requiring the user's time
and attention. For example, suppose an agent must choose between two
flights operated by the same airline but arriving at different times.
Asking about airline preference may reveal more about the user, but it
does not help distinguish between the two options. By contrast, asking
about an unstated arrival deadline can determine which flight meets the
user's needs (Figure~\ref{fig:intent-alignment}). The challenge is therefore
to recognize when clarification can improve the task outcome and what
information the agent needs to obtain.

\Needspace{18\baselineskip}
\begingroup
\setlength{\intextsep}{0pt}
\begin{wrapfigure}{r}{0.43\textwidth}
    \centering
    \setlength{\parskip}{0pt}
    \alignedwrapgraphic[trim=2bp 2.5bp 0.75bp 1.25bp,clip]{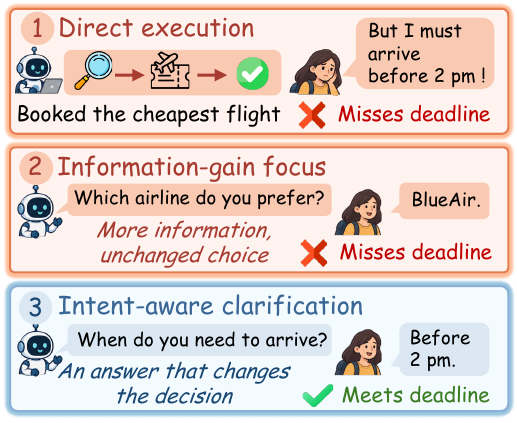}
    \par
    \caption{\textbf{Interactive intent alignment.} \methodname targets
    questions that improve task outcomes, not merely reduce uncertainty.}
    \label{fig:intent-alignment}
\end{wrapfigure}

Prior work guides clarification through information-based
signals~\citep{deng2026uncertaintyawareclarificationllmagents,kong2026infopoinformationdrivenpolicyoptimization},
value-of-information
estimates~\citep{rao2018learningaskgoodquestions,dong2026valueinformationframeworkhumanagent},
and rewards from subsequent interaction
outcomes~\citep{zhang2025modelingfutureconversationturns,wu2025collabllmpassiverespondersactive}.
Despite this progress, identifying which clarifications help the agent
remains challenging, particularly while its policy is being trained.
\textbf{First, a clarification can appear useful without actually
improving the task outcome.} A question may reveal additional information
or be followed by a successful outcome, yet the same agent might have
achieved an equally good result by proceeding on its own.
\textbf{Second, clarification value can change even at a fixed
interaction history.} Training can improve how the agent uses an answer
and how it proceeds on its own by different amounts. The incremental
value of the same clarification can therefore change as the policy
learns, requiring renewed value supervision.

To address these challenges, we introduce \methodname, which couples
clarification-value learning with task-policy learning through paired
interaction outcomes. For each evaluated clarification target, paired
rollouts compare asking with native continuation under the current policy
from the same interaction history. Native continuation retains the agent's
own ability to ask, so the utility difference measures the added benefit
of the targeted clarification. We call this difference the
\emph{residual query value} (RQV).
The same outcomes provide controller value targets and policy advantages;
for each target, the mean query advantage equals its RQV label. During training,
controller predictions prioritize which comparisons to collect, while
sampling-corrected policy updates use observed task returns. Fresh outcomes
renew value supervision as the agent learns. At inference, the trained
agent acts on clarification answers, while the controller guides when
and what to ask using predicted RQV.
\par
\WFclear
\endgroup

We evaluate \methodname on four benchmarks spanning personalized decision
making, user-in-the-loop tool use, and requests with missing information or
false premises. With Qwen3-4B as the backbone, \methodname achieves the
highest mean score in all ten evaluation settings. Gains over the strongest
baseline reach $13.2$ and $6.5$ percentage points on UserGym Travel and
IN3 underspecification judgment accuracy, respectively.
Ablations show that the trained policy improves over the original
backbone even without controller-based action selection at inference.
Updating the controller during policy training also yields higher mean
scores than freezing it in all six ablation settings.

Our contributions are summarized as follows.
\begingroup
\setlength{\topsep}{3pt}
\setlength{\partopsep}{0pt}
\setlength{\itemsep}{2pt}
\setlength{\parsep}{0pt}
\begin{itemize}
    \item We construct \textbf{paired policy-relative supervision} from
    matched ask-versus-proceed outcomes, providing controller value targets
    and task-policy advantages relative to the agent's native continuation.
    \item We develop \textbf{value-guided co-training}, in which predicted
    values guide rollout collection, observed returns drive
    sampling-corrected policy updates, and fresh outcomes renew value
    supervision as the agent learns.
    \item We demonstrate \textbf{gains beyond inference-time guidance},
    with ablations showing that co-training improves the task policy
    itself and that updating clarification values provides benefits
    beyond fixed guidance.
\end{itemize}
\endgroup

\section{Preliminaries}
\label{sec:prelim}

\subsection{Interactive Intent Alignment}
\label{sec:prelim-setup}

We consider interactive tasks in which the user's initial request may
not fully or accurately express the goals and constraints that determine
success. For example, a travel assistant may need to establish whether
the user prioritizes price or ticket flexibility, while a service agent
may need to identify which reservation the user wants to change.
Clarification can help resolve these gaps, but its purpose is to inform
task decisions rather than serve as an objective in itself.

Formally, let $G$ denote the user's latent goal and $h$ the interaction
history, which provides only partial information about $G$. The task
policy $\pi_\theta$ selects actions based on $h$ and may ask for
clarification when permitted by the environment, with replies added to
$h$. We evaluate the resulting episode $\tau$ by its task
utility $U(\tau,G)$, including any benchmark-specified interaction costs.
To compare asking with proceeding, we consider their
\emph{continuations} from $h$, each comprising the remainder of
an episode generated under $\pi_\theta$. During training, we assume
access to an environment that can sample multiple continuations from
the same $h$ and $G$, with the policy held fixed within each comparison.

\subsection{Group-Relative Policy Optimization}
\label{sec:prelim-grpo}

To train the task policy from these sampled outcomes, we build on
Group Relative Policy Optimization
(GRPO)~\citep{shao2024deepseekmathpushinglimitsmathematical}.
GRPO estimates advantages by normalizing returns within a group,
without a learned critic. In our setting, consider $n$ continuations
sampled under the matched conditions above, yielding completed
episodes $\tau_1,\ldots,\tau_n$ with utilities $U_i=U(\tau_i,G)$.
Standard GRPO constructs the group-relative advantages as
\begin{equation}
A_i=\frac{U_i-\bar U}{s_U+\delta},
\qquad
\bar U=\frac{1}{n}\sum_{j=1}^{n}U_j,
\label{eq:grpo}
\end{equation}
where $s_U$ is the standard deviation of the group utilities and
$\delta>0$ ensures numerical stability. These advantages are used
in a clipped policy-ratio objective with KL regularization toward
a reference policy. Our method retains policy-ratio clipping and
reference-policy regularization while modifying how rollouts are
collected and how their utilities are used to construct advantages.
Implementation details are provided in
Appendix~\ref{app:implementation}.

\section{\methodname}
\label{sec:method}

\methodname co-trains a task policy and a clarification-value controller
from paired rollouts. At each training history, we compare asking about
selected targets with letting the current policy continue on its own.
The observed returns provide value labels for the controller and
advantages for updating the task policy. The updated controller guides
target selection in the next round, and new rollouts under the updated
policy provide fresh value labels (Figure~\ref{fig:method}).

% Put the overview at the next page top, after its introduction in the text.
\suppressfloats[t]
\begin{figure}[t]
    \centering
    \setlength{\abovecaptionskip}{3pt}
    \setlength{\belowcaptionskip}{0pt}
    \setlength{\parskip}{0pt}
    \includegraphics[width=\linewidth]{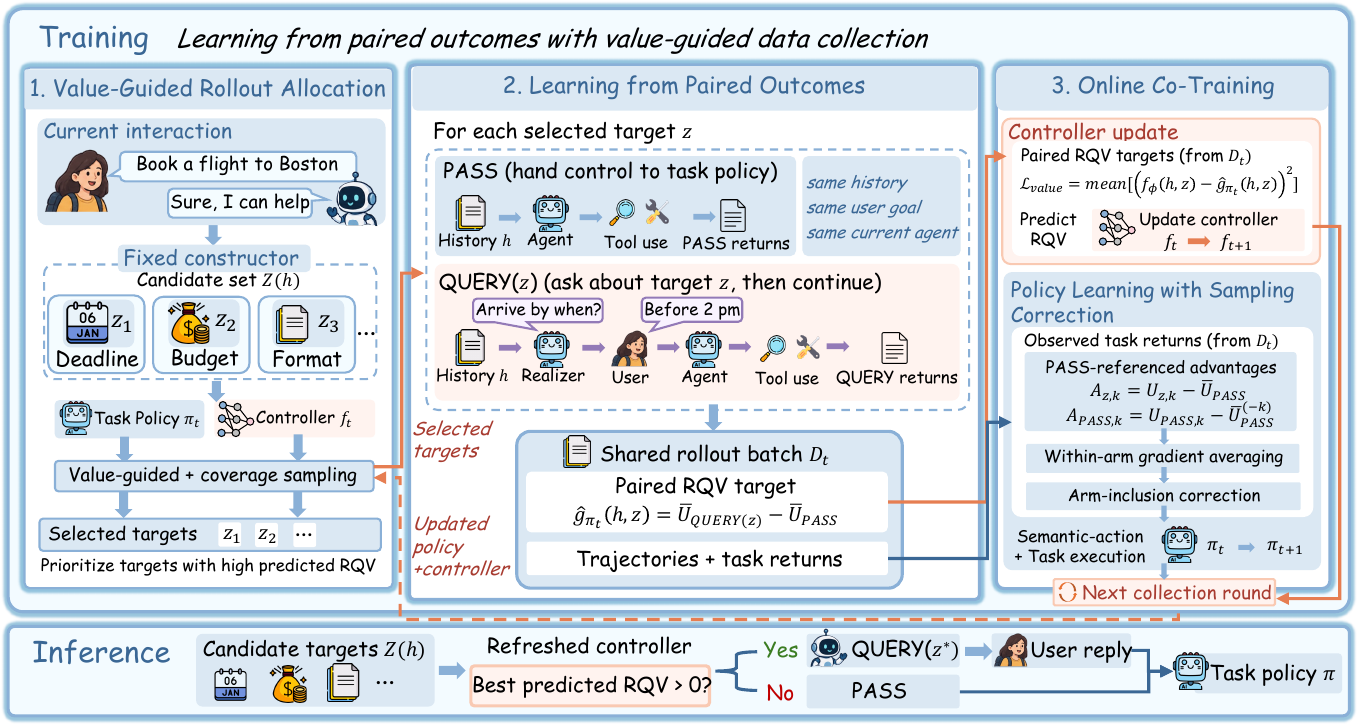}
    \par\vspace{-2pt}
    \caption{\textbf{Overview of \methodname.} Value-guided paired rollouts
    train the controller and task policy from shared outcomes. At inference,
    the controller gates clarification by predicted incremental utility.}
    \label{fig:method}
    \vspace{-12pt}
\end{figure}

\subsection{Problem Setup and Residual Query Value}
\label{sec:setup}
\label{sec:ig}

At an interaction history $h$, we compare letting the task policy proceed
($\PASS$) with first asking for additional information ($\QUERY(z)$).
Here $z$ identifies what information to request, such as a travel date or
budget. Choosing $\PASS$ delegates the remaining interaction to the task
policy, which retains its own ability to ask questions.

A fixed target constructor supplies the finite candidate set $\mathcal Z(h)$.
A history is \emph{clarification-eligible} when another user exchange is
permitted and this set is nonempty. For $\QUERY(z)$, a fixed question realizer turns
the target into a question, and the question--answer exchange is appended
to $h$ before task execution resumes. This separates the decision of what
to ask from question wording.

For a fixed task policy $\pi$, let $V^\pi(h)$ denote the expected utility
of choosing $\PASS$ and $Q^\pi(h,z)$ that of choosing $\QUERY(z)$. Both
continuations use the same policy $\pi$ after the initial action. Their
difference defines the \emph{residual query value} (\RQV),
\begin{equation}
g_\pi(h,z)=Q^\pi(h,z)-V^\pi(h).
\label{eq:rqv}
\end{equation}
The expectations average over the latent goal $G\mid h$ and rollout
randomness. RQV measures what a targeted clarification adds beyond native
continuation, including any questions the agent would ask on its own.
Benchmark-specified interaction costs are included in utility.

\Needspace{2\baselineskip}
Information gain, $\IG(h,z)=I(G;Y_z\mid h)$, measures what an answer $Y_z$
reveals about the user's goal. An informative answer need not improve the
agent's decisions, so IG and RQV can rank clarification targets differently.
This reflects the classical distinction between information gain and decision
value~\citep{howard1966information}.

RQV also depends on how the policy uses the answer. Ignoring it may leave
utility unchanged, while misusing it can hurt even without interaction
costs. Classical value of information evaluates an optimal decision maker;
RQV evaluates the policy being trained.
Training can improve answer use and native continuation by different
amounts, changing RQV even for a fixed history and target. We therefore
renew value supervision as the policy learns (Appendix~\ref{app:proofs}).

\subsection{Learning from Paired Outcomes}
\label{sec:paired}

We estimate RQV by comparing $\QUERY(z)$ and $\PASS$ continuations from the
same history $h$ and latent user goal $G$. We hold the task policy $\pi$
fixed throughout the comparison. After the initial action, both branches
follow $\pi$ without further controller intervention. Writing $\bar U_a$
for the mean task utility of continuations collected for action $a$, the
paired estimate is
\begin{equation}
\widehat g_\pi(h,z)
=
\bar U_{\QUERY(z)}-\bar U_{\PASS}.
\label{eq:rqv-estimator}
\end{equation}
Averaging these labels over rollout randomness and $G\mid h$ recovers
$g_\pi(h,z)$ (Appendix~\ref{app:interface}).
These measured gains provide the training labels for a lightweight controller
$f_\phi(h,z)$. The controller predicts RQV from the history and candidate
target, without observing the latent goal. At training step $t$, we fit it
to labels collected under $\pi_t$ by minimizing
\begin{equation}
\mathcal L_{\mathrm{value}}(\phi;\mathcal D_t)
=
\frac{1}{|\mathcal D_t|}
\sum_{(h,z)\in\mathcal D_t}
\left(f_\phi(h,z)-\widehat g_{\pi_t}(h,z)\right)^2,
\label{eq:controller-loss}
\end{equation}
where $\mathcal D_t$ contains the current batch of history--target pairs and
their labels. Once fitted, the controller scores candidates without running
paired continuations for each one. We use these predictions to allocate
subsequent rollouts.

\subsection{Value-Guided Rollout Allocation}
\label{sec:controller}

Under a fixed rollout budget, sampling only from the task policy may
overlook useful low-probability targets. We therefore compare a value-guided
target $z_g$ and, when available, a distinct coverage target $z_c$ against a
shared $\PASS$ reference. The guided draw prioritizes predicted RQV, while
the coverage draw gathers evidence about candidates the controller may
undervalue.

The task policy provides the starting distribution for both draws. It
assigns probabilities $\pi_\theta^A(a\mid h)$ over $\PASS$ and the available
$\QUERY(z)$ actions by scoring fixed text representations of these actions
and normalizing across them, without an additional action head.
Conditioning on asking gives the target distribution $p_\pi(z\mid h)$.
We mix it with a uniform component to form a reference that keeps every
candidate reachable,
\[
\bar p_\pi(z\mid h)
=
(1-\epsilon)p_\pi(z\mid h)+\frac{\epsilon}{|\mathcal Z(h)|},
\qquad 0<\epsilon<1.
\]
For the guided target $z_g$, we choose a sampling distribution $\mu$ that
balances predicted RQV against deviation from the reference $\bar p_\pi$,
\[
\max_{\mu\in\Delta(\mathcal Z(h))}
\E_{z\sim\mu}[f_\phi(h,z)]
-\beta^{-1}\KL(\mu\Vert\bar p_\pi).
\]
The resulting proposal is
\begin{equation}
\mu^*(z\mid h)
\propto
\bar p_\pi(z\mid h)
\exp\{\beta f_\phi(h,z)\},
\label{eq:guided-proposal}
\end{equation}
where $\beta>0$ controls the strength of value guidance.
We sample $z_g\sim\mu^*(\cdot\mid h)$.

For the coverage target $z_c$, we sample from $\bar p_\pi(\cdot\mid h)$
renormalized over the remaining candidates $\mathcal Z(h)\setminus\{z_g\}$.
This draw reserves rollout budget for another candidate without value
guidance.
At each history $h$, we collect $K_p\geq2$ $\PASS$ continuations,
$K_g\geq1$ continuations for $\QUERY(z_g)$, and $K_c\geq1$ continuations
for $\QUERY(z_c)$.
Each distinct semantic action is an arm. With only one candidate, its
query arm receives $K_g+K_c$ continuations. The arms and replica counts are
chosen before observing continuation outcomes (Appendix~\ref{app:rollouts}).

\subsection{Policy Learning with Sampling Correction}
\label{sec:rollouts}

Controller predictions guide collection; observed returns provide policy
advantages. We correct for nonuniform arm inclusion to preserve the
collecting policy's action weighting at a fixed history.

\paragraph{Task-return advantages.}
Replacing the group-mean baseline in Equation~\ref{eq:grpo} with $\PASS$
references gives the query and $\PASS$ advantages,
\[
A_{z,k}=U_{z,k}-\bar U_{\PASS},
\qquad
A_{\PASS,k}=U_{\PASS,k}-\bar U_{\PASS}^{(-k)},
\]
where $U_{z,k}$ and $U_{\PASS,k}$ are the observed utilities of the $k$th
continuation in the corresponding arms. The baseline
$\bar U_{\PASS}^{(-k)}$ is the mean return of the other $\PASS$
continuations. This leave-one-out reference prevents a $\PASS$ trajectory's
own return from entering its baseline. We treat advantages as fixed during
optimization and do not apply group standard-deviation normalization.

For each query arm, averaging these advantages gives exactly the RQV label
in Equation~\ref{eq:rqv-estimator}. Thus, the controller learns the mean gain
from asking about a target, while the policy learns from the return of each
individual continuation. Both use observed outcomes; the controller's
predictions serve only to guide collection during training.

\paragraph{Correcting arm selection.}
Guided collection changes both how likely an arm is to appear and how many
continuations it receives. We first average gradient contributions within
each arm, then weight each arm mean
to recover the collecting policy's action probabilities.
At a fixed history $h$, let $\mathcal I_h$ be the set of included arms and
$\rho_h(a)$ the probability that arm $a$ is included.
A query arm is included when selected by either the guided or the
coverage draw, while $\rho_h(\PASS)=1$.
\begingroup
\postdisplaypenalty=10000 % Keep the estimator with its definition below.
With collecting policy $\pi_\theta$, the Horvitz--Thompson correction is
\begin{equation}
\widehat H(h)
=
\sum_{a\in\mathcal{I}_h}
\frac{\pi_\theta^A(a\mid h)}{\rho_h(a)}
\,\bar F_a,
\label{eq:ht-estimator}
\end{equation}
where $\bar F_a$ is the mean policy-gradient contribution for arm $a$.
Each continuation contributes its advantage multiplied by the gradient of
the task policy's log-probability. This log-probability covers the initial semantic
action and subsequent agent-generated tokens. We hold the correction
weights fixed when forming the gradient estimate.
\par
\endgroup

Uniform mixing and the always-included $\PASS$ arm ensure positive
inclusion probabilities. Conditional on $h$, $G$, and the selected arms, continuations
are sampled independently under the collecting policy.
Under this conditioning, each reference is independent of the
continuation it evaluates, and the shared and leave-one-out $\PASS$
references have the same expectation across arms.
The baseline terms therefore cancel in expectation under on-policy
arm weighting.

\Needspace{15\baselineskip}
\begin{proposition}[Policy Gradient Preservation under Guided Sampling]
\label{prop:selection}
Fix a clarification-eligible history $h$ and the collecting policy
$\pi_\theta$. Let $m_a(h)$ denote the expected gradient contribution of
arm $a$ under on-policy continuation sampling. Suppose every arm with
$\pi_\theta^A(a\mid h)>0$ has positive inclusion probability and
\[
\E[\bar F_a\mid h,\ a\in\mathcal I_h]=m_a(h).
\]
Then the corrected estimator satisfies
\[
\E[\widehat H(h)\mid h]
=
\sum_a\pi_\theta^A(a\mid h)\,m_a(h),
\]
recovering the expected gradient contribution under on-policy arm sampling
at $h$.
\end{proposition}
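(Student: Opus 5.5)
The plan is a standard Horvitz--Thompson argument: rewrite the random sum over included arms as a deterministic sum over all arms with inclusion indicators, then apply the tower property conditioning on which arms are included. Let $\mathcal A(h)=\{\PASS\}\cup\{\QUERY(z):z\in\mathcal Z(h)\}$ be the finite set of semantic actions at $h$. Write
\[
\widehat H(h)=\sum_{a\in\mathcal A(h)}\1\{a\in\mathcal I_h\}\,\frac{\pi_\theta^A(a\mid h)}{\rho_h(a)}\,\bar F_a .
\]
Arms with $\pi_\theta^A(a\mid h)=0$ contribute nothing, so the summand is defined by convention there. For every other arm, $\rho_h(a)>0$ holds by hypothesis; in the paper's construction this comes from the uniform mixing term in $\bar p_\pi$ and from $\rho_h(\PASS)=1$. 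The correction weights $\pi_\theta^A(a\mid h)/\rho_h(a)$ are deterministic given $h$, since they depend only on $\pi_\theta$, $f_\phi$, $\epsilon$, and $\beta$. This matters because the paper holds them fixed and chooses arms before any outcomes are observed (Appendix~\ref{app:rollouts}).

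Because $\mathcal A(h)$ is finite, linearity of expectation applies term by term. For each arm $a$ I condition on the inclusion event:
\[
\E\big[\1\{a\in\mathcal I_h\}\bar F_a\mid h\big]=\Pr(a\in\mathcal I_h\mid h)\,\E[\bar F_a\mid h,a\in\mathcal I_h]=\rho_h(a)\,m_a(h).
\]
The last equality uses the stated hypothesis $\E[\bar F_a\mid h,a\in\mathcal I_h]=m_a(h)$. Multiplying by the weight cancels $\rho_h(a)$ and leaves $\pi_\theta^A(a\mid h)\,m_a(h)$. Summing over $a$ gives the claimed identity, and the right-hand side is by definition the expected gradient contribution under on-policy arm sampling at $h$.

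The main point to justify is the conditional-mean hypothesis, which is where the real content sits even though the proposition assumes it. I would add a remark explaining why it holds in the construction, rather than leaving it unexplained. First, the arms and replica counts are fixed before outcomes are observed. Second, conditional on $h$, $G$, and the selected arms, continuations are drawn independently and on-policy. Third, each advantage's baseline is independent of the continuation it multiplies: the shared $\PASS$ mean for query arms, and the leave-one-out mean for $\PASS$ arms since $K_p\ge2$. The paper notes that these baselines all have the same expectation across arms, so any baseline-induced term in $m_a(h)$ is common to all arms.

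Consequently, the within-arm mean $\bar F_a$ has the same conditional expectation whatever the replica count, including the merged $K_g+K_c$ case, and whether $a$ was reached by the guided or the coverage draw. Averaging over $G\mid h$ then yields $m_a(h)$. The only subtlety I would flag is that $\bar F_a$ must be an average within the arm, not a sum. A sum would scale with the number of replicas, which does depend on which draw selected the arm, and that would break unbiasedness.
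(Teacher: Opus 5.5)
Your proof is correct and is the same Horvitz--Thompson argument the paper uses: write $\widehat H(h)$ with inclusion indicators $I_a=\mathbf 1[a\in\mathcal I_h]$, use $\E[I_a\bar F_a]=\rho_h(a)\,m_a$, and let the fixed weights $\pi_\theta^A(a\mid h)/\rho_h(a)$ cancel $\rho_h(a)$. The only difference is that the paper first conditions on the latent goal $G$, using that target selection depends only on the visible history so that $\E[I_a\mid h,G]=\rho_h(a)$, and then averages over $G\mid h$, whereas you apply the $h$-level hypothesis directly; your side remark tacitly relies on this same independence of selection from $G$ when it averages over $G\mid h$ conditional on inclusion.
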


At a fixed history and collecting policy, the correction preserves the
expected unclipped task-return gradient. Guidance can still affect
finite-budget learning by changing estimator variance and the controller's
training distribution. This guarantee excludes policy-ratio clipping and
reference-policy regularization (Appendix~\ref{app:selection-proof}).

\subsection{Online Co-Training and Inference}
\label{sec:cotrain}

\paragraph{Online co-training.}
We repeat collection and update both components to track how clarification
value changes as the task policy learns. First, we bootstrap $f_0$ on paired
rollouts under $\pi_0$, drawing targets from $\bar p_{\pi_0}$ without value
guidance. At step $t$, we hold both components fixed during collection.
$f_t$ guides target selection, and $\pi_t$ generates the continuations.
We then use the same batch to fit $f_{t+1}$ to the paired value labels and
update $\pi_t$ to $\pi_{t+1}$ using the sampling-corrected policy update.

Sharing the batch requires only one rollout phase for the two updates.
It also introduces a one-update lag because $f_{t+1}$ learns from outcomes under
$\pi_t$ and then guides $\pi_{t+1}$. After the final policy update, we hold
$\pi_T$ fixed and collect fresh paired rollouts to refresh the controller
with labels generated under the final task policy.
Algorithm~\ref{alg:rqpo} summarizes the procedure, and
Appendix~\ref{app:staleness} discusses the supervision lag.

\paragraph{Inference.}
At inference, we retain the target constructor and question realizer.
While active, the refreshed controller asks about the highest-scoring target
at each clarification-eligible history $h$ only if its predicted RQV is
positive,
\begin{equation}
z^\star=\argmax_{z\in\mathcal{Z}(h)}f_\phi(h,z),
\qquad
a=
\begin{cases}
\QUERY(z^\star), & f_\phi(h,z^\star)>0,\\
\PASS, & \text{otherwise}.
\end{cases}
\label{eq:inference-policy}
\end{equation}
After a query, the controller repeats this rule at the next eligible history.
Choosing $\PASS$ ends controller intervention for the rest of the interaction;
the task policy continues and may still ask questions on its own. The rule
uses predicted gains relative to continuation under $\pi_T$, with no paired
rollouts or sampling correction at inference. We fix the threshold at zero
as implied by the RQV definition, without validation tuning.
To separate the policy's learned behavior from controller guidance, we also
evaluate the trained policy with semantic actions selected by $\pi_T^A$
instead of controller scores.

\begingroup
\setlength{\textfloatsep}{8pt plus 2pt minus 2pt}
\setlength{\floatsep}{8pt plus 2pt minus 2pt}
% Compact paragraph headings without negative glue leaking across page breaks.
\makeatletter
\renewcommand{\paragraph}{\@startsection{paragraph}{4}{\z@}%
  {1pt plus 0.5pt minus 0.5pt}{-1em}{\normalsize\bfseries}}
\makeatother
\section{Experiments}
\label{sec:experiments}

\vspace{-4pt}
\subsection{Experimental Setup}
\label{sec:exp_setup}

\begin{table}[!t]
\centering
\small
\setlength{\tabcolsep}{2.5pt}
\renewcommand{\arraystretch}{1.05}
\setlength{\abovecaptionskip}{0pt}
\setlength{\belowcaptionskip}{3pt}
\setlength{\parskip}{0pt}
\caption{Task performance with Qwen3-4B is reported as mean$\pm$std over
3 seeds (percentages except cumulative Intention credits),
\textbf{bold} and \underline{underlined} values indicate the best and second-best means, respectively.}
\label{tab:main}
\resizebox{\textwidth}{!}{%
\begin{tabular}{l cccc ccc c cc}
\toprule
\multirow{2}{*}{Method}
& \multicolumn{4}{c}{\textbf{UserGym}}
& \multicolumn{3}{c}{$\bm{\tau^2}$\textbf{-Bench}}
& \multicolumn{1}{c}{\textbf{IN3}}
& \multicolumn{2}{c}{\textbf{AskBench}}
\\
\cmidrule(lr){2-5}
\cmidrule(lr){6-8}
\cmidrule(lr){9-9}
\cmidrule(lr){10-11}
& Travel & Turtle & \textit{Inten.} & \textit{Tele.}
& Air. & Ret. & Telec.
& Judg.Acc
& AskMind & AskOv. \\
\midrule
\rowcolor{gray!15}
\multicolumn{11}{l}{\textit{Prompting-based}} \\

Vanilla
& \ms{26.74}{1.49}
& \ms{17.97}{1.73}
& \ms{185.50}{2.00}
& \ms{49.12}{1.87}
& \ms{23.78}{2.89}
& \ms{20.89}{2.61}
& \ms{20.00}{1.25}
& \ms{65.74}{1.54}
& \ms{38.70}{1.27}
& \ms{57.24}{1.18}
\\

ReAct
& \ms{28.66}{1.87}
& \ms{19.93}{1.63}
& \ms{192.17}{3.81}
& \ms{53.66}{2.44}
& \ms{26.67}{2.89}
& \ms{24.54}{1.29}
& \ms{20.83}{2.77}
& \ms{70.99}{1.42}
& \ms{44.34}{2.33}
& \ms{56.55}{1.18}
\\

Reflexion
& \ms{29.34}{1.76}
& \ms{21.70}{3.96}
& \ms{195.67}{3.95}
& \ms{52.03}{4.28}
& \ms{26.67}{1.55}
& \ms{24.26}{1.89}
& \ms{21.33}{2.20}
& \ms{71.61}{4.18}
& \ms{45.01}{1.59}
& \ms{58.75}{1.52}
\\

Proactive CoT
& \ms{27.01}{0.32}
& \ms{23.96}{4.06}
& \ms{176.17}{3.41}
& \ms{52.97}{2.84}
& \ms{25.44}{3.03}
& \ms{27.50}{2.50}
& \ms{20.83}{1.44}
& \ms{69.14}{2.33}
& \ms{42.58}{0.57}
& \ms{57.83}{1.23}
\\

PRIME
& \ms{28.20}{0.59}
& \ms{19.83}{2.58}
& \ms{173.17}{2.74}
& \ms{53.22}{4.45}
& \ms{25.00}{0.00}
& \ms{25.83}{1.44}
& \ms{18.33}{1.44}
& \ms{68.83}{1.42}
& \ms{41.24}{0.87}
& \ms{58.25}{1.00}
\\

\rowcolor{gray!15}
\multicolumn{11}{l}{\textit{Threshold-based}} \\

CLAM
& \ms{46.23}{2.88}
& \ms{10.56}{2.65}
& \underline{\ms{200.00}{3.95}}
& \ms{20.33}{1.41}
& \ms{28.53}{2.89}
& \ms{26.33}{2.20}
& \ms{21.67}{2.20}
& \ms{71.67}{1.07}
& \ms{45.01}{1.10}
& \ms{59.50}{1.25}
\\

INTENT-SIM
& \ms{27.59}{1.54}
& \ms{18.78}{1.26}
& \ms{198.42}{2.32}
& \ms{52.03}{1.41}
& \underline{\ms{33.33}{2.89}}
& \ms{27.17}{2.46}
& \ms{18.33}{2.89}
& \ms{73.06}{2.95}
& \ms{42.07}{2.06}
& \ms{59.00}{1.30}
\\

VoI-Gate
& \ms{49.19}{1.67}
& \ms{21.15}{2.45}
& \ms{198.08}{3.68}
& \ms{38.96}{2.63}
& \ms{31.67}{2.77}
& \ms{27.50}{3.01}
& \ms{21.17}{1.82}
& \ms{71.91}{1.42}
& \ms{44.64}{2.03}
& \ms{57.17}{1.88}
\\

\rowcolor{gray!15}
\multicolumn{11}{l}{\textit{Information-based training}} \\

IG-Reward
& \ms{43.75}{1.20}
& \ms{28.58}{2.99}
& \ms{194.68}{2.43}
& \ms{54.22}{3.45}
& \ms{25.00}{2.50}
& \ms{17.99}{3.82}
& \ms{22.06}{2.89}
& \ms{66.67}{4.24}
& \ms{42.87}{1.48}
& \ms{58.92}{1.76}
\\

InfoPO
& \ms{35.41}{1.13}
& \ms{25.52}{2.38}
& \ms{198.75}{3.88}
& \ms{53.66}{3.14}
& \ms{25.00}{0.00}
& \underline{\ms{30.02}{1.84}}
& \ms{22.50}{1.25}
& \ms{68.21}{5.35}
& \underline{\ms{46.32}{2.61}}
& \underline{\ms{60.15}{1.77}}
\\

IGPO
& \ms{31.62}{0.13}
& \ms{18.75}{0.63}
& \ms{194.25}{3.53}
& \ms{52.03}{3.84}
& \ms{26.67}{3.64}
& \ms{26.67}{2.89}
& \ms{16.67}{2.89}
& \ms{66.67}{2.83}
& \ms{45.31}{4.10}
& \ms{58.58}{0.29}
\\

\rowcolor{gray!15}
\multicolumn{11}{l}{\textit{Supervised and outcome-based training}} \\

SFT
& \underline{\ms{50.48}{1.76}}
& \ms{27.92}{3.11}
& \ms{198.25}{4.72}
& \ms{38.96}{3.63}
& \ms{25.77}{1.82}
& \ms{26.80}{3.12}
& \ms{21.83}{3.20}
& \underline{\ms{77.47}{5.26}}
& \ms{43.92}{1.19}
& \ms{58.50}{1.30}
\\

GRPO
& \ms{42.26}{0.94}
& \underline{\ms{28.96}{2.29}}
& \ms{196.42}{1.51}
& \underline{\ms{54.85}{3.08}}
& \ms{24.33}{3.23}
& \ms{27.50}{4.33}
& \underline{\ms{24.17}{2.89}}
& \ms{66.05}{3.25}
& \ms{39.65}{0.35}
& \ms{58.67}{0.63}
\\

\rowcolor{ourrow}
\textbf{\methodname (ours)}
& \textbf{\ms{63.72}{0.48}}
& \textbf{\ms{29.79}{1.15}}
& \textbf{\ms{209.17}{3.52}}
& \textbf{\ms{57.67}{2.45}}
& \textbf{\ms{35.00}{2.00}}
& \textbf{\ms{32.50}{1.29}}
& \textbf{\ms{24.94}{1.77}}
& \textbf{\ms{83.95}{1.42}}
& \textbf{\ms{48.37}{2.66}}
& \textbf{{\ms{61.27}{2.19}}}
\\
\bottomrule
\end{tabular}%
}
\par
\end{table}

\paragraph{Benchmarks.}
We evaluate on four benchmarks spanning ten settings.
\textbf{UserGym}~\citep{qian2025userrltraininginteractiveusercentric}
comprises Travel, Turtle, Intention, and Telepathy, with Intention and
Telepathy held out from training.
\textbf{$\tau^2$-Bench}~\citep{barres2025tau2benchevaluatingconversationalagents}
evaluates user--agent coordination and tool use in the Airline, Retail, and
Telecom domains. \textbf{IN3}~\citep{qian2024tellmoreimplicituser} evaluates
whether an agent recognizes underspecified instructions and recovers missing
task requirements. \textbf{AskBench}~\citep{zhao2026askaskbenchrubricguidedrlvr}
distinguishes requests with missing information in AskMind from requests
with false premises in AskOverconfidence.

\paragraph{Baselines.}
We compare four groups of baselines. \textbf{Prompting-based methods} use
fixed model weights and guide behavior through prompting or agent scaffolds,
including Vanilla~\citep{yang2025qwen3technicalreport},
ReAct~\citep{yao2023reactsynergizingreasoningacting},
Reflexion~\citep{shinn2023reflexionlanguageagentsverbal},
Proactive CoT~\citep{deng2023promptingevaluatinglargelanguage}, and
PRIME~\citep{wang2026primetrainingfreeproactive}.
\textbf{Threshold-based methods} include
CLAM~\citep{kuhn2023clamselectiveclarificationambiguous},
INTENT-SIM~\citep{zhang2023clarifynecessaryresolvingambiguity}, and VoI-Gate,
which score ambiguity, intent uncertainty, and expected clarification
utility, respectively. We tune their decision thresholds on validation data.
VoI-Gate is our threshold-based adaptation of value-of-information
reasoning~\citep{dong2026valueinformationframeworkhumanagent}.
\textbf{Information-based training methods} include
IG-Reward~\citep{deng2026uncertaintyawareclarificationllmagents},
InfoPO~\citep{kong2026infopoinformationdrivenpolicyoptimization}, and
IGPO~\citep{wang2026informationgainbasedpolicyoptimization}, which use
information-derived signals either alone or alongside task-return
supervision. \textbf{Supervised and outcome-based training methods} include
SFT, which learns from demonstrations, and
GRPO~\citep{shao2024deepseekmathpushinglimitsmathematical}, which constructs
group-relative advantages from task returns.

\paragraph{Evaluation and metrics.}
All main comparisons use Qwen3-4B as the backbone. We use the official
evaluation harnesses with our configured turn budgets and report main
results as mean and standard deviation over 3 seeds.
Table~\ref{tab:main} reports UserGym Pass-1 scores; domain-level task pass rates on
$\tau^2$-Bench; underspecification judgment accuracy (Judg.Acc) on IN3; and
answer accuracy (Acc) on AskBench. All columns are percentages except
UserGym Intention, whose per-detail credits accumulate within an episode.
Benchmark-specific settings, baseline implementations, and metric
definitions are detailed in Appendix~\ref{app:experiments}.

\vspace{-4pt}
\subsection{RQ1: Overall Performance}

\methodname achieves the highest mean score in all ten settings in
Table~\ref{tab:main}, including gains of $13.24$~points on Travel and
$6.48$~points in IN3 judgment accuracy over the strongest baselines.
Baseline strengths vary across settings. SFT leads the baselines on Travel
and IN3, while InfoPO leads on AskBench and GRPO on Turtle and Telepathy.
\methodname improves across these different interaction demands using the
same paired-outcome learning procedure. The gains on Intention and
Telepathy further support transfer to held-out environments without
additional training of either the policy or controller.

On AskBench and IN3, \methodname also achieves higher mean scores than
ACT~\citep{chen2025learningclarifymultiturnconversations} and
CollabLLM~\citep{wu2025collabllmpassiverespondersactive}, which learn from
interaction outcomes, across all six reported metrics
(Table~\ref{tab:outcome-baselines}). We next
examine how renewing value supervision and guiding collection contribute
to these gains, and how much the trained policy retains without controller
guidance at inference.

\Needspace{4\baselineskip}
\vspace{-4pt}
\subsection{RQ2: Where the Gain Comes From}

RQV compares targeted clarification with continuation under the current
policy. Better answer use can increase the benefit of asking, while
stronger native continuation can reduce it. This motivates renewing value
supervision as the policy learns. We test its practical benefit by freezing
the controller after bootstrap while continuing task-policy training.
Freezing lowers mean scores by $1.27$--$6.70$~points across all six settings
(Figure~\ref{fig:exploration}, bottom). Both variants retain controller-guided
collection, so this comparison assesses the benefit of updating the
controller within the training loop. The results support renewing
supervision over reusing guidance fitted to the initial backbone.
All variants retain policy learning from observed task returns, with no
additional reward from the controller's predictions.

\Needspace{22\baselineskip}
\begingroup
\setlength{\intextsep}{0pt}
\begin{wrapfigure}{r}{0.43\textwidth}
    \centering
    \setlength{\parskip}{0pt}
    \alignedwrapgraphic[trim=3.25bp 1bp 4bp 6bp,clip]{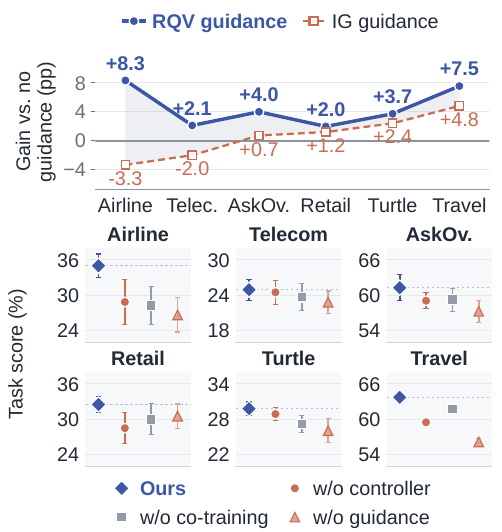}
    \par
    \caption{Effects of rollout guidance (top, gains over unguided
    collection in percentage points) and component ablations (bottom, mean$\pm$std over
    3 seeds) on task performance.}
    \label{fig:exploration}
\end{wrapfigure}

Removing value guidance during collection reduces mean scores by
$1.97$--$8.33$~points (Figure~\ref{fig:exploration}, bottom). This variant
retains paired comparisons, task-return advantages, sampling correction,
and controller updates, with five continuations per eligible history.
These gains support value-guided allocation for joint policy and controller
learning under a fixed rollout budget.

To test the guidance signal, we replace predicted RQV with an IG proxy in
the same proposal rule. RQV guidance achieves higher mean scores than both
unguided collection and IG guidance in all six settings
(Figure~\ref{fig:exploration}, top). Relative to unguided collection,
IG guidance improves four settings but reduces Airline and Telecom by
$3.34$ and $2.00$~points. With the proposal rule and task-return objective
retained, this comparison tests which training outcomes the guidance signal
favors. Under a shared turn budget, informative questions
may leave fewer turns for execution; this is one possible explanation for
the degradation on these two settings. RQV instead prioritizes the
estimated benefit relative to native continuation while keeping policy
updates grounded in observed returns.

The resulting gains also persist in the trained policy. Replacing the
controller with policy-based semantic action selection at inference
reduces mean scores by $0.44$--$6.14$~points, but the policy still exceeds
the strongest baseline on Travel ($59.49\%$ vs.\ $50.48\%$).
In five of six settings, the mean drop from removing collection guidance
exceeds that from removing controller-based selection at inference. This pattern
highlights the controller's role in shaping training experience alongside
its deployment role. Comparisons using the same constructor and realizer also support
gains beyond the menu interface (Appendix~\ref{app:ablations}).

\par
\WFclear
\endgroup
\Needspace{20\baselineskip}
\vspace{-4pt}
\subsection{RQ3: What the Agent Resolves}

\begingroup
\setlength{\intextsep}{0pt}
\begin{wrapfigure}{r}{0.43\textwidth}
    \centering
    \setlength{\parskip}{0pt}
    \alignedwrapgraphic[trim=9bp 7.74bp 4bp 4.25bp,clip]{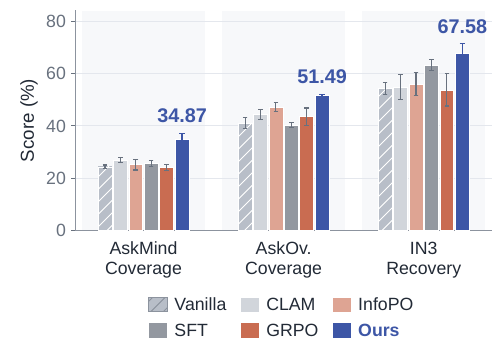}
    \par
    \caption{Instruction recovery on IN3 and complete clarification coverage
    on AskBench measure how fully agents resolve user requests
    (mean$\pm$std over 3 seeds).}
    \label{fig:clarification}
\end{wrapfigure}

Task success alone does not reveal how completely an agent resolves the
user's request. Figure~\ref{fig:clarification} examines this distinction.
On IN3, \emph{instruction recovery} (Recover) measures the
importance-weighted fraction of missing details resolved. On AskBench,
\emph{clarification coverage} (Cov) measures the fraction of tasks with \emph{every} required
point resolved, including missing information in AskMind and false premises
in AskOverconfidence. \methodname achieves the highest mean on all three
metrics, with recovery of $67.58\%$ and coverage of $34.87\%$ and $51.49\%$,
respectively. Because coverage requires resolving all required points,
these gains reflect more complete clarification, including correction of
false premises rather than only elicitation of missing details.

Yet coverage remains below answer accuracy on both AskBench settings.
An agent can therefore score well on final answers while leaving parts
of a request unresolved. IN3 separates another pair of capabilities.
Judgment accuracy evaluates whether the agent asks on underspecified
instructions and proceeds on fully specified ones, while instruction
recovery measures how much of the missing information the exchange
resolves. Improvements in both support gains
in recognizing clarification needs as well as recovering task-relevant
details. Identifying an underspecified request alone does not ensure that
subsequent questions obtain the information needed to complete it.
These metrics capture distinct aspects of interactive intent alignment,
from recognizing a need to clarify to resolving requirements and producing
a correct answer.

\par
\WFclear
\endgroup
\Needspace{19\baselineskip}
\vspace{-4pt}
\subsection{RQ4: Performance across Model Scales}
\label{sec:exp_scale}

\begingroup
\setlength{\intextsep}{0pt}
\begin{wrapfigure}{r}{0.4\textwidth}
    \centering
    \setlength{\parskip}{0pt}
    \alignedwrapgraphic[trim=7.75bp 0.75bp 3bp 3bp,clip]{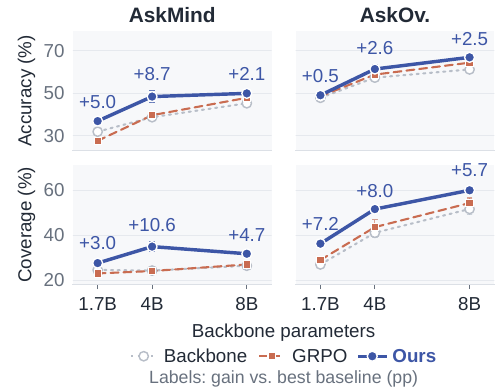}
    \par
    \caption{Answer accuracy and clarification coverage on AskBench,
    with gains over the stronger baseline in points.}
    \label{fig:scaling}
\end{wrapfigure}

At each model size (Qwen3-1.7B, 4B, and 8B), we compare \methodname with
the backbone and GRPO on AskBench, matching continuation budgets per
clarification-eligible history for the two training methods.
\methodname achieves higher mean accuracy
and coverage than both baselines
at all three sizes (Figure~\ref{fig:scaling}). At 8B, accuracy exceeds GRPO
by $2.08$~points on AskMind and $2.50$~points on AskOverconfidence.
Coverage improves by $4.74$ and $5.66$~points, respectively. The benefit of
clarification learning thus persists as the underlying policy becomes
stronger within the tested range.

Model scale does not improve both metrics in lockstep. For \methodname
on AskMind, mean answer accuracy increases with size, while mean coverage
peaks at 4B rather than 8B. These mean trends suggest that stronger
answering ability alone does not ensure more complete clarification.

\par
\WFclear
\endgroup

\endgroup

\Needspace{6\baselineskip}
\section{Related Work}
\label{sec:related_work}

\paragraph{Clarification and adaptive decisions.}
Research on clarification addresses how to resolve ambiguous information
needs~\citep{aliannejadi2020convai3,min2020ambigqa} in settings such as
question answering~\citep{kuhn2023clamselectiveclarificationambiguous,zhang2023clarifynecessaryresolvingambiguity},
code generation~\citep{mu2023clarifygptempoweringllmbasedcode}, and
planning~\citep{zhang2024askbeforeplanproactivelanguageagents}.
Interactive agent benchmarks extend this focus to implicit user
intentions and tool use~\citep{qian2024tellmoreimplicituser,appworldul2026},
incomplete instructions~\citep{underspecbench2026}, and communication
failures~\citep{driftbench2026,zhang2026oracle}.
These settings require agents to recognize missing information and decide
when to seek it. Value-of-information methods offer a decision-theoretic
view of this problem by weighing the expected benefit of obtaining
information against the cost of the
interaction~\citep{howard1966information,rao2018learningaskgoodquestions,dong2026valueinformationframeworkhumanagent}.

\paragraph{Learning from interaction outcomes.}
Another line of work evaluates questions through the interactions they
produce. Future-turn simulation~\citep{zhang2025modelingfutureconversationturns}
generates preference labels from anticipated dialogue, and
CollabLLM~\citep{wu2025collabllmpassiverespondersactive} incorporates subsequent
turns into its rewards.
STaR-GATE~\citep{andukuri2024stargateteachinglanguagemodels} uses gains in
personalized-response likelihood to select questions.
For dialogue-action learning,
ACT~\citep{chen2025learningclarifymultiturnconversations} constructs and
refreshes preference pairs through on-policy simulations.
Related ideas appear in retrieval, where
RICE-PO~\citep{li2026ricepoturningretrievalinteractions} assigns credit using
local counterfactual outcomes.

\paragraph{Information-based rewards and guided collection.}
A complementary approach measures how feedback changes an agent's beliefs
or subsequent decisions. IGPO rewards
answer-likelihood gains~\citep{wang2026informationgainbasedpolicyoptimization},
while clarification-specific rewards track user-goal belief
updates~\citep{deng2026uncertaintyawareclarificationllmagents}.
InfoPO~\citep{kong2026infopoinformationdrivenpolicyoptimization} combines
next-action likelihood changes under observed versus masked feedback with
outcome advantages.
Beyond reward design, experience
prioritization~\citep{schaul2016prioritizedexperiencereplay} addresses how to
allocate training effort across collected experiences.
\methodname connects these strands by using predicted clarification value
to guide collection and paired task outcomes to train both the value model
and the policy.
Appendix~\ref{app:related_work} discusses additional evaluation settings and
broader adaptive-agent mechanisms.

\section{Conclusion, Limitations and Future Work}
\label{sec:conclusion}

We presented \methodname for interactive intent alignment through
value-guided clarification. Paired interaction outcomes supervise both a
clarification-value controller and the task policy, while renewed
supervision tracks the policy's changing capabilities. Across four
benchmarks, \methodname improves task performance and clarification quality
with Qwen3-4B.

The fixed target constructor and question realizer restrict the controller's
available clarification choices. Training also relies on repeated
continuations from the same history and user goal, an assumption that is
difficult to satisfy in live interactions. Our evaluation uses simulated
users, so effectiveness with real users and evolving goals remains to be
established. Future work could jointly learn target construction and
question realization, and develop value supervision that requires fewer
paired rollouts.

% ICLR 2027 statements. All are placed after the main text, before the references,
% and do not count toward the page limit.

% \subsection*{The use of large language models (LLMs)}
% We used large language models to polish the writing and check grammar.
% The authors are responsible for the final manuscript.

% \subsection*{Ethics statement}
% Our experiments use existing benchmarks and simulated users.
% Real-world use should respect user privacy and avoid unnecessary clarification.

% \subsection*{Reproducibility statement}
% We provide implementation details, hyperparameters, and evaluation protocols
% in the appendix. Code is available at the repository linked in
% the abstract.

% \subsubsection*{Author Contributions}

% \subsubsection*{Acknowledgments}

\bibliography{references}
\bibliographystyle{iclr2027_conference}

\clearpage
\appendix
\addtocontents{toc}{\protect\setcounter{tocdepth}{2}}
\begingroup
\renewcommand{\contentsname}{Appendix Contents}
\hypersetup{hidelinks,linktoc=all}
\pdfbookmark[0]{Appendix Contents}{appendix.contents}
\tableofcontents
\endgroup
\clearpage
\section{Additional Theory and Method Details}
\label{app:method}

\methodname learns clarification value relative to the task policy and uses
that value to guide the experiences from which the policy learns. This
appendix develops the reasoning behind this design. We first explain what
the value comparison should measure and how paired continuations provide
its supervision. We then derive the rollout allocation rule and the
sampling correction needed to learn from these selectively collected
outcomes. Finally, we connect the two learning processes through
co-training as the task policy evolves.

\subsection{Defining Policy-Relative Clarification Value}
\label{app:proofs}

To learn useful clarification, we need to distinguish the utility achieved
after an exchange from the utility added by that exchange. A successful
outcome alone cannot make this distinction because the agent may already
have been able to succeed by proceeding on its own. We therefore compare a
targeted clarification with native continuation from the same interaction
history under the same task policy.

We identify a clarification by its information target $z$, such as a travel
date or a budget, rather than by its surface wording. At history $h$, a
fixed constructor supplies candidate targets, and a fixed realizer turns a
selected target into a question. The comparison is between $\QUERY(z)$,
which obtains the corresponding user response before task execution
resumes, and $\PASS$, which delegates the continuation to the task policy,
including its native ability to ask. Let $Q^\pi(h,z)$ and $V^\pi(h)$ denote
the expected task utilities of these two continuations under policy $\pi$.
Their difference defines residual query value,
\[
g_\pi(h,z)=Q^\pi(h,z)-V^\pi(h).
\]
This reference makes clarification value specific to both the information
requested and the policy that will act on the answer. Both branches use the
same task utility, including any benchmark-specified interaction costs.

\paragraph{Information and decision value can rank questions differently.}
\label{app:proof-mismatch}

To see why this comparison differs from information gain, consider a
zero-cost, single-decision task with latent goal $G=(X,W)$, where
$X\sim\mathrm{Bernoulli}(1/2)$ and $W\sim\mathrm{Unif}\{1,\ldots,K\}$
are independent, with $K>2$. The agent receives reward one for predicting
$X$ correctly and zero otherwise. Revealing $W$ provides information gain
$\log K$ but leaves the best achievable expected reward at $1/2$, whereas
revealing $X$ provides only $\log 2$ information gain but raises that reward
to one. The more informative question therefore need not provide greater
task value, even when both answers are reliable and free to obtain.

Even when the answer is task-relevant, however, its benefit still depends
on the policy receiving it. Keeping the same task and question, consider
two policies that both predict zero without clarification. After observing
$X$, one follows the answer and achieves reward one, whereas the other
ignores it and retains expected reward $1/2$. Their residual query values
are therefore $1/2$ and zero, respectively, despite receiving exactly the
same information. This difference motivates evaluating clarification
through the outcomes of the current task policy rather than through the
information revealed alone.

\paragraph{Relation to classical value of information.}
\label{app:rqv-bound}

Unlike classical value of information, which evaluates the benefit to an
optimal decision maker, RQV measures the benefit under the policy that
actually continues the task. Because training can improve the $\QUERY$ and
$\PASS$ continuations by different amounts, the value of the same
clarification can change even when the information it reveals remains
fixed. We therefore estimate this policy-relative value from paired
outcomes generated by the policy being trained.

\subsection{Estimating RQV from Paired Continuations}
\label{app:interface}

To turn this policy-relative comparison into a learning signal, we collect
paired continuations that share the same history, latent user goal, and
task policy. Holding these quantities fixed isolates the initial
clarification decision from differences in the task or the agent. For each
selected target, the $\QUERY$ branch obtains a user response before
execution resumes, while the $\PASS$ branch delegates directly to the task
policy. Both then continue without further controller intervention,
including any questions the policy asks on its own.

Specifically, at a history $h$, the constructor supplies a finite candidate
set $\mathcal Z(h)$. When another user exchange is permitted and
$\mathcal Z(h)$ is nonempty, the semantic action set is
\[
\mathcal A(h)=\{\PASS\}\cup\{\QUERY(z):z\in\mathcal Z(h)\}.
\]
Otherwise, it reduces to $\{\PASS\}$.

\paragraph{Target construction and realization.}
The candidate constructor is fixed across training and inference. It uses
predefined information fields from the task interface where available;
otherwise, a frozen model constructs a small target set from the current
history. Target construction does not use hidden annotations, rollout
outcomes, task rewards, or RQV labels.
We hold the realized candidate set fixed within each comparison, so every
branch starts from the same decision menu.

Let $G\sim p(G\mid h)$ denote the latent user goal, which is held by the
environment rather than supplied to the policy or controller. Executing
$\QUERY(z)$ invokes a fixed realizer that produces a benchmark-native
question, followed by a user response $y\sim p_U(\cdot\mid G,h,z)$.
The exchange is appended to the history before task execution resumes.
Its tokens are environment-side and masked from the task-policy loss, so
the learnable decisions are which target to select and how to act on the
answer. Keeping the constructor and realizer fixed separates these two
sources of improvement from changes in candidate generation or question
wording.

\paragraph{Utility and interaction cost.}
\label{app:cost}
To compare the two branches on the same scale, we use the task utility
\begin{equation}
U(\tau,G)
=
R(\tau,G)
-
\sum_{z\in\mathcal{Q}(\tau)}c(z),
\label{eq:appendix-utility}
\end{equation}
where $R(\tau,G)$ is the benchmark return, $\mathcal Q(\tau)$ collects
clarification events with multiplicity, and $c(z)\ge0$ is the interaction
cost of an event. We set $c(z)=0$ when no such cost is specified. Comparing
\QUERY\ and \PASS\ under the same utility measures whether the information
obtained is worth the interaction required to obtain it.

\paragraph{From branch utilities to a paired estimate.}
With this common utility, we can estimate the comparison by sampling both
continuations under a fixed collecting policy $\pi$. Let $\tau_a$ denote
a continuation whose initial semantic action is fixed to $a$ and whose
subsequent behavior follows $\pi$. Conditional on the same history and
user goal, the two branch values are
\begin{equation}
\begin{aligned}
Q^\pi(h,z;G)&=\E[U(\tau_{\QUERY(z)},G)\mid h,G],\\
V^\pi(h;G)&=\E[U(\tau_{\PASS},G)\mid h,G].
\end{aligned}
\label{eq:goal-conditional-values}
\end{equation}
These expectations include the user response and all subsequent rollout
randomness. Each group holds $h$ and $G$ fixed, and its arms and replica
counts are chosen before observing continuation outcomes. We then sample
continuations independently conditional on these quantities. If a selected
target receives $K_z$ query continuations with utilities $U_{z,k}$ and
the shared reference contains $K_p$ continuations with utilities
$U_{\PASS,j}$, Equation~\ref{eq:rqv-estimator} becomes
\[
\widehat g_\pi(h,z)
=\bar U_{\QUERY(z)}-\bar U_{\PASS}
=\frac{1}{K_z}\sum_{k=1}^{K_z}U_{z,k}
-\frac{1}{K_p}\sum_{j=1}^{K_p}U_{\PASS,j}.
\]
Conditional on the allocated arms and their replica counts, each sample
mean has expectation equal to its branch value. Averaging over the
allocation therefore preserves these expectations, and linearity gives
\[
\begin{aligned}
\E[\widehat g_\pi(h,z)\mid h,G,\ z\text{ included}]
&=\E[\bar U_{\QUERY(z)}\mid h,G,\ z\text{ included}]\\
&\quad-\E[\bar U_{\PASS}\mid h,G,\ z\text{ included}]\\
&=Q^\pi(h,z;G)-V^\pi(h;G).
\end{aligned}
\]
Thus matching the user goal makes the paired difference estimate the
effect of the initial clarification, rather than a difference in what the
two users wanted. To recover the history-conditioned RQV in
Section~\ref{sec:setup}, we next average over the latent goal. In
\methodname, target construction and selection use the visible history, not the hidden goal
or future outcomes, so including $z$ does not change $p(G\mid h)$.
The law of total expectation then yields
\begin{equation}
\begin{aligned}
\E[\widehat g_\pi(h,z)\mid h,\ z\text{ included}]
&=\E_{G\mid h}\bigl[Q^\pi(h,z;G)-V^\pi(h;G)\bigr]\\
&=g_\pi(h,z).
\end{aligned}
\label{eq:paired-unbiased}
\end{equation}

\paragraph{Learning from the paired labels.}
Accordingly, we fit $f_\phi(h,z)$ to these paired differences using the
squared-error objective in Equation~\ref{eq:controller-loss}.
Squared-error regression targets the conditional mean of the label, which
the derivation above identifies as RQV under the collecting policy. The
controller thereby learns to predict the utility added by clarification,
rather than the utility attained after asking alone, and scores new
state--target pairs without observing the latent goal or collecting paired
continuations at inference.

\subsection{Value-Guided Collection under a Fixed Budget}
\label{app:rollouts}

Paired evaluation identifies the value of a selected target, but evaluating
every candidate at every history would spread the rollout budget thinly.
We instead combine value-guided target selection with a separate coverage
draw and repeated continuations. These serve complementary purposes.
Guidance prioritizes promising questions, coverage collects evidence beyond
that prioritization, and replication improves the precision of the resulting
comparisons.

\paragraph{Semantic-action probabilities.}
Before applying value guidance, we construct a reference distribution that
represents the task policy's own preferences. The task policy defines a
categorical policy over the menu $\mathcal A(h)$ without an additional
action head. Each action has a nonempty canonical serialization $s(a)$,
scored by its mean token log-probability conditioned on the history and
available actions,
\[
L_\theta(a\mid h)
=
\frac{1}{|s(a)|}\log\pi_\theta\bigl(s(a)\mid h,\,\mathcal{A}(h)\bigr),
\]
where $|s(a)|$ is its token count. Normalizing these length-adjusted scores
gives
\begin{equation}
\pi_\theta^A(a\mid h)
=
\frac{\exp L_\theta(a\mid h)}
{\sum_{a'\in\mathcal{A}(h)}\exp L_\theta(a'\mid h)}.
\label{eq:semantic-action-prob}
\end{equation}
This distribution provides both the policy reference for collection and
the on-policy arm weights used in Proposition~\ref{prop:selection}.

\paragraph{A policy-based reference with full support.}
Conditioned on asking, the task policy induces
\[
p_\pi(z\mid h)
=
\frac{\pi_\theta^A(\QUERY(z)\mid h)}
{\sum_{z'}\pi_\theta^A(\QUERY(z')\mid h)}.
\]
We use this distribution as the starting point for exploration, retaining
the policy's preferences among targets. To keep low-probability candidates
reachable, we mix it with a uniform distribution,
\[
\bar p_\pi(z\mid h)
=(1-\epsilon)p_\pi(z\mid h)
+\frac{\epsilon}{|\mathcal Z(h)|},
\qquad 0<\epsilon<1.
\]
This gives every candidate positive reference probability before applying
value guidance.

\paragraph{Guided proposal.}
\label{app:proposal-proof}

With this reference in place, we use the controller to concentrate part of
the rollout budget on clarifications with higher predicted value. Rather
than replace the policy's preferences with a greedy choice, we penalize
departure from the reference. At fixed $h$ and fixed collecting policy and
controller, write $p(z)=\bar p_\pi(z\mid h)>0$ and assume finite predicted
values. For $\beta>0$, the resulting tradeoff is
\[
\max_{\mu\in\Delta(\mathcal Z(h))}
\E_{z\sim\mu}[f_\phi(h,z)]-\beta^{-1}\KL(\mu\Vert p).
\]
To derive the proposal, introduce a multiplier $\lambda$ for the
normalization constraint $\sum_z\mu(z)=1$. Differentiating with respect to
each positive $\mu(z)$ gives the stationarity condition and its solution,
\[
\begin{aligned}
0&=f_\phi(h,z)-\frac{1}{\beta}
\left(\log\frac{\mu(z)}{p(z)}+1\right)+\lambda,\\
\mu(z)&=p(z)\exp\{\beta f_\phi(h,z)+\beta\lambda-1\}.
\end{aligned}
\]
The multiplier contributes the same factor to every target. Enforcing
normalization therefore yields
\[
\mu^*(z\mid h)
=
\frac{p(z)\exp\{\beta f_\phi(h,z)\}}
{\sum_{z'}p(z')\exp\{\beta f_\phi(h,z')\}},
\]
which recovers Equation~\ref{eq:guided-proposal}. Positive reference
probabilities and finite predictions make this solution interior, and
strict concavity of the objective makes it the unique maximizer. As
$\beta\to0$, the proposal approaches $p$; increasing $\beta$ gives
predicted value more influence relative to the reference. The tilt
prioritizes predicted RQV, while its effect on actual task utility depends
on how these predictions align with current-policy outcomes.

\paragraph{Coverage complements guidance.}
To obtain evidence beyond this prioritization, we reserve a separate draw
for reference-based sampling.
When at least two candidates are available, we first draw
$z_g\sim\mu^*(\cdot\mid h)$, then draw a distinct target from the
reference distribution over the remaining candidates,
\[
\Pr(z_c=z\mid h,z_g)
=\frac{\bar p_\pi(z\mid h)}{1-\bar p_\pi(z_g\mid h)},
\qquad z\ne z_g.
\]
Support smoothing and this second draw address different needs. Smoothing
keeps every target reachable, whereas the coverage draw reserves rollout
budget for sampling without the value tilt among the remaining targets.
Excluding $z_g$ ensures that this budget evaluates an additional candidate
rather than replicating the guided choice. Guidance uses the controller's
current estimates; coverage supplies comparisons that can revise them.

\paragraph{A shared reference for paired comparisons.}
Having chosen which targets to evaluate, we allocate continuations to the
query arms and a shared $\PASS$ reference. Reusing the same $\PASS$ sample
mean supports both comparisons without collecting a separate reference for
each target. With at least two candidates, the group contains $K_p\ge2$
$\PASS$ continuations, $K_g\ge1$ continuations for the guided target, and
$K_c\ge1$ for the coverage target,
\[
S_h
=
\{
\PASS^{\times K_p},\;
\QUERY(z_g)^{\times K_g},\;
\QUERY(z_c)^{\times K_c}
\}.
\]
Using at least two $\PASS$ replicas also allows each to use the others as
its policy-learning baseline, as developed in the next subsection. When
$|\mathcal Z(h)|=1$, the single query target receives $K_g+K_c$
continuations. The GRPO comparison uses the same total continuation budget
per clarification-eligible history. Experimental allocations are given in
Appendix~\ref{app:implementation}.

\paragraph{Precision under a fixed budget.}
To quantify the effect of replication, fix the collecting policy and
condition on $\mathcal C$, comprising the shared history $h$, latent goal
$G$, selected arms, and replica counts. Let $\sigma_z^2$ and
$\sigma_P^2$ denote the finite per-continuation return variances under
$\QUERY(z)$ and $\PASS$, respectively. Since continuations are sampled
independently within and across arms, the variance of their sample-mean
difference is
\begin{equation}
\begin{aligned}
\Var\bigl(\widehat g_\pi(h,z)\mid\mathcal C\bigr)
&=\Var\bigl(\bar U_{\QUERY(z)}\mid\mathcal C\bigr)
+\Var\bigl(\bar U_{\PASS}\mid\mathcal C\bigr)\\
&=\frac{\sigma_z^2}{K_z}+\frac{\sigma_P^2}{K_p},
\end{aligned}
\label{eq:paired-variance}
\end{equation}
where $K_z$ is the replica count of the selected query arm.

The two terms explain how replication improves the comparison. Increasing
$K_p$ reduces reference noise in both labels, whereas increasing $K_z$
reduces the query-specific noise for target $z$. The two labels share
reference noise, which cancels when comparing their empirical values with
each other but still affects either comparison against proceeding. Under
the fixed total budget, we
assign more query replicas to the guided target, sharpening the prioritized
comparison while retaining a coverage arm to evaluate another candidate.

\Needspace{5\baselineskip}
\subsection{Policy Learning with Sampling Correction}
\label{app:selection-proof}

Guided collection changes which arms are observed and how many
continuations each receives, but not the task utility that the policy is
trained to improve. We therefore start from the task-return objective under
the policy's own action distribution, then derive how the shared $\PASS$
reference and the sampling correction recover its gradient from the
structured rollout groups.

\paragraph{The task-return objective.}
Fix the history $h$, candidate set, and conditional goal distribution
$p(G\mid h)$. Let $P_\theta^a(\tau\mid h,G)$ denote the law of a
continuation whose initial action is fixed to $a$ and whose subsequent
behavior follows the task policy. Conditional on $G$, the expected utility
under the policy's own arm selection is
\[
J_h(\theta;G)
=\sum_{a\in\mathcal A(h)}\pi_\theta^A(a\mid h)
\E_{\tau\sim P_\theta^a}[U(\tau,G)].
\]
Averaging over the latent goal gives
$J_h(\theta)=\E_{G\mid h}[J_h(\theta;G)]$. We first derive the update
conditional on $G$ and take this final average below. All gradients are
evaluated at the collecting policy. We assume differentiable continuation
laws with locally parameter-independent support and that differentiation
can be interchanged with expectation; utility and environment kernels have
no direct dependence on $\theta$.

Both arm selection and subsequent execution depend on the task-policy
parameters. Their joint score is
\begin{equation}
S_\theta(a,\tau\mid h,G)
=\nabla_\theta\log\pi_\theta^A(a\mid h)
+\nabla_\theta\log P_\theta^a(\tau\mid h,G).
\label{eq:continuation-score}
\end{equation}
The second term sums token scores over the agent's native continuation
after the initial semantic action. Fixed question realization, user
responses, and other environment transitions contribute no score, and
$G$ indexes the environment law rather than becoming an input to the
policy. Applying the product rule and score identity to $J_h$ therefore
gives the gradient we aim to recover,
\[
\nabla_\theta J_h(\theta;G)
=\sum_a\pi_\theta^A(a\mid h)
\E_{\tau\sim P_\theta^a}[U(\tau,G)S_\theta(a,\tau\mid h,G)].
\]

\paragraph{PASS-referenced advantages.}
The paired group already supplies a common task-return reference.
Reusing it for policy learning makes each query advantage measure utility
relative to proceeding, consistently with the controller's supervision.
For a query continuation, we use
\[
A_{z,k}=U_{z,k}-\bar U_{\PASS}.
\]
For a $\PASS$ continuation, however, including its own return in the
reference would make the baseline depend on the execution it evaluates.
We therefore leave that continuation out,
\[
A_{\PASS,k}
=
U_{\PASS,k}
-
\frac{1}{K_p-1}
\sum_{j\neq k}U_{\PASS,j}.
\]
The mean query advantage is exactly the empirical RQV label in
Equation~\ref{eq:rqv-estimator}, so the policy and controller learn from
the same outcome comparison at trajectory and arm level, respectively.
We treat advantages as fixed coefficients in the policy update and do not
divide them by a group-dependent standard deviation.

\paragraph{Why the reference preserves the objective.}
Write $B_{a,k}$ for the reference subtracted from $U_{a,k}$. Conditional
on $h,G$ and the selected arms, $B_{a,k}$ is independent of its own
continuation and has expectation $V^\pi(h;G)$ for every arm. This follows
from independent continuation sampling, the fixed number of $\PASS$
replicas, and leave-one-out construction for the $\PASS$ arm itself.
Because each continuation law is normalized, its expected score is zero,
\[
\E_{\tau\sim P_\theta^a}
[\nabla_\theta\log P_\theta^a(\tau\mid h,G)]=0.
\]
The independent baseline therefore contributes only the arm-selection
score in expectation,
\[
\E[B_{a,k}S_\theta(a,\tau_{a,k}\mid h,G)\mid h,G,\ a\text{ included}]
=V^\pi(h;G)\nabla_\theta\log\pi_\theta^A(a\mid h).
\]
Crucially, the expected reference is the same for all arms, so weighting
by the policy's arm probabilities makes the remaining term cancel,
\[
V^\pi(h;G)\sum_a\pi_\theta^A(a\mid h)
\nabla_\theta\log\pi_\theta^A(a\mid h)
=V^\pi(h;G)\nabla_\theta\sum_a\pi_\theta^A(a\mid h)=0.
\]
Thus replacing returns with $\PASS$-referenced advantages preserves the
gradient of $J_h(\theta;G)$ for both arm selection and execution. Although
a $\PASS$ continuation has zero expected advantage, its advantage can
still covary with the continuation score, so its execution gradient need
not vanish. Native task behavior can therefore improve on $\PASS$
trajectories as well as on query trajectories.

\paragraph{Averaging within each arm.}
We now account for the two ways guided collection changes the data.
First, an arm can receive more replicas without being intended to carry
more weight in the task objective. We therefore average its contributions,
\[
\bar F_a=\frac{1}{K_a}\sum_{k=1}^{K_a}
A_{a,k}S_\theta(a,\tau_{a,k}\mid h,G).
\]
Let $m_a(h,G)$ denote the expected contribution of one on-policy
continuation of arm $a$ with the same reference construction. Conditional
on inclusion, the continuation law and expected reference are unchanged,
whether the target was selected for guidance or coverage. Even when these
roles receive different replica counts, we therefore have
\[
\E[\bar F_a\mid h,G,\ a\text{ included}]=m_a(h,G).
\]
The baseline calculation above establishes
$\sum_a\pi_\theta^A(a\mid h)m_a(h,G)=\nabla_\theta J_h(\theta;G)$.

\paragraph{Correcting arm inclusion.}
Second, guided collection makes some arms more likely to appear in a
group. Let $\mathcal I_h$ be the set of included arms and
$\rho_h(a)=\Pr(a\in\mathcal I_h\mid h)$ their inclusion probabilities.
For $|\mathcal Z(h)|\ge2$, write $\mu(z)=\mu^*(z\mid h)$ and
$p(z)=\bar p_\pi(z\mid h)$. A query arm enters either as the guided
target or as the coverage target after a different guided draw. Summing
these disjoint events gives
\begin{equation}
\rho_h(\QUERY(z))
=\mu(z)+\sum_{z'\ne z}\mu(z')\frac{p(z)}{1-p(z')}.
\label{eq:inclusion-prob}
\end{equation}
The $\PASS$ arm is always included, so $\rho_h(\PASS)=1$. When there is
only one candidate, its query arm also has inclusion probability one.

Having removed replica-count weighting, we correct this second effect
by weighting each included arm mean by $\pi_\theta^A(a\mid h)/\rho_h(a)$,
as in Equation~\ref{eq:ht-estimator}. Write
$I_a=\mathbf 1[a\in\mathcal I_h]$, with zero contribution for excluded
arms. Target selection is independent of $G$ conditional on $h$, so
$\E[I_a\mid h,G]=\rho_h(a)$. Taking expectations makes the correction
explicit,
\begin{equation}
\begin{aligned}
\E[\widehat H(h)\mid h,G]
&=
\sum_a\frac{\pi_\theta^A(a\mid h)}{\rho_h(a)}
\E[I_a\bar F_a\mid h,G]\\
&=\sum_a\frac{\pi_\theta^A(a\mid h)}{\rho_h(a)}
\rho_h(a)m_a(h,G)\\
&=\sum_a\pi_\theta^A(a\mid h)m_a(h,G)
=\nabla_\theta J_h(\theta;G).
\end{aligned}
\label{eq:conditional-gradient-preservation}
\end{equation}
Finally, averaging over the fixed distribution $G\mid h$ gives
\[
\E[\widehat H(h)\mid h]
=\sum_a\pi_\theta^A(a\mid h)m_a(h)
=\nabla_\theta J_h(\theta),
\]
where $m_a(h)=\E_{G\mid h}[m_a(h,G)]$ and
the objective $J_h(\theta)$ is the one defined at the start of this
subsection. This proves Proposition~\ref{prop:selection} for the
fixed-history task-return contribution of both action selection and
execution.

All inclusion weights are treated as fixed when forming this estimator.
The controller can therefore change which arms receive data without giving
those arms extra weight in the expected task-return gradient at $h$.
Reference-policy regularization and the subsequent optimization step are
specified separately in Appendix~\ref{app:implementation}.

\paragraph{Coverage bounds the correction weights.}
The coverage draw also controls the size of the correction, independently
of how strongly the controller concentrates the guided proposal.

\Needspace{12\baselineskip}
\begin{proposition}[Coverage-Induced Bounds on Correction Weights]
\label{prop:coverage-weight-bound}
Fix a clarification-eligible history $h$ and $0<\epsilon<1$. Use the
smoothed reference $p=\bar p_\pi$ and the guided-plus-coverage selection
rule in Equation~\ref{eq:inclusion-prob}, including the single-candidate
convention. For any guided distribution $\mu$ and every candidate $z$,
\begin{equation}
\begin{aligned}
\frac{\pi_\theta^A(\QUERY(z)\mid h)}{\rho_h(\QUERY(z))}
&\le\frac{1}{1-\epsilon},\\
\frac{\pi_\theta^A(\PASS\mid h)}{\rho_h(\PASS)}
&=\pi_\theta^A(\PASS\mid h)\le1.
\end{aligned}
\label{eq:selection-weight-bound}
\end{equation}
\end{proposition}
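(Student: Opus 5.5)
The plan is to bound the numerator and lower-bound the inclusion probability separately. The $\PASS$ case is immediate, since $\rho_h(\PASS)=1$ and $\pi_\theta^A(\PASS\mid h)$ is a probability. The single-candidate convention is equally direct: $\rho_h(\QUERY(z))=1$, so the ratio is at most $1\le 1/(1-\epsilon)$. The substantive case is $|\mathcal Z(h)|\ge2$.

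For the numerator, I use the definition of $p_\pi$. Write $q=\sum_{z'}\pi_\theta^A(\QUERY(z')\mid h)\le1$. Then $\pi_\theta^A(\QUERY(z)\mid h)=q\,p_\pi(z\mid h)\le p_\pi(z\mid h)$. Since $\bar p_\pi(z\mid h)=(1-\epsilon)p_\pi(z\mid h)+\epsilon/|\mathcal Z(h)|\ge(1-\epsilon)p_\pi(z\mid h)$, this gives
\[
\pi_\theta^A(\QUERY(z)\mid h)\le\frac{p(z)}{1-\epsilon},
\]
with $p=\bar p_\pi$. It therefore suffices to show $\rho_h(\QUERY(z))\ge p(z)$ for every guided distribution $\mu$.

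For the lower bound I start from Equation~\ref{eq:inclusion-prob}. Since $1-p(z')\le1$, each coverage term satisfies $\mu(z')\,p(z)/(1-p(z'))\ge\mu(z')\,p(z)$. Summing gives
\[
\rho_h(\QUERY(z))\ge\mu(z)+p(z)\bigl(1-\mu(z)\bigr).
\]
The right-hand side is a convex combination of $1$ and $p(z)$ with weight $\mu(z)$ on $1$. Because $p(z)\le1$, it is at least $p(z)$, and this holds uniformly in $\mu$. That uniformity is exactly the claimed independence from how strongly the controller concentrates the guided proposal. Combining the two bounds yields $\pi_\theta^A(\QUERY(z)\mid h)/\rho_h(\QUERY(z))\le 1/(1-\epsilon)$.

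The main obstacle is making sure the lower bound $\rho_h\ge p(z)$ does not degrade when $\mu$ puts nearly all its mass on some other $z'$. The coverage term is what handles this: a guided draw of $z'\neq z$ still leaves $z$ reachable with probability $p(z)/(1-p(z'))\ge p(z)$. A minor check is that the denominators $1-p(z')$ are positive. This follows because $|\mathcal Z(h)|\ge2$ and the uniform mixing give every candidate $p>0$, so each $p(z')<1$. The numerator step also needs care: it uses that $\pi_\theta^A$ over query actions is $p_\pi$ scaled by the total query mass $q\le1$, which is what lets the $\PASS$ probability mass only help.
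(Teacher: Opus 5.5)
Your proof is correct and follows the paper's argument essentially step for step. You lower-bound $\rho_h(\QUERY(z))$ by $\mu(z)+p(z)(1-\mu(z))\ge p(z)$ using $1-p(z')\le 1$ in Equation~\ref{eq:inclusion-prob}, and bound the numerator by $p_\pi(z\mid h)\le p(z)/(1-\epsilon)$, since conditioning on asking can only raise the probability and smoothing keeps a $(1-\epsilon)$ fraction of $p_\pi$; the single-candidate and $\PASS$ cases are handled exactly as in the paper.
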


\begin{proof}
For at least two candidates, $0<p(z)<1$, and
Equation~\ref{eq:inclusion-prob} implies
\[
\rho_h(\QUERY(z))
\ge\mu(z)+p(z)\sum_{z'\ne z}\mu(z')
=p(z)+(1-p(z))\mu(z)
\ge p(z).
\]
Since $p_\pi(z\mid h)$ conditions on asking,
$p_\pi(z\mid h)\ge\pi_\theta^A(\QUERY(z)\mid h)$. Support smoothing
then gives
\[
\rho_h(\QUERY(z))\ge p(z)
\ge(1-\epsilon)p_\pi(z\mid h)
\ge(1-\epsilon)\pi_\theta^A(\QUERY(z)\mid h).
\]
Dividing by the positive inclusion probability gives the query bound.
The single-candidate case satisfies the same bound because its inclusion
probability is one. The $\PASS$ bound follows from
$\rho_h(\PASS)=1$.
\end{proof}

The bound depends neither on the guidance strength nor
on the range of controller predictions. Thus, coverage serves both to
broaden supervision and to keep guided collection compatible with bounded,
untruncated correction weights. With our $\epsilon=0.2$, every inclusion
weight is at most $1.25$.

\paragraph{Relation to information-based training signals.}
\label{app:ig-signals}

Information-based objectives provide intermediate supervision by measuring
how an observation changes a reference quantity, such as the likelihood of
a gold answer, the inferred user goal, or subsequent agent behavior
\citep{wang2026informationgainbasedpolicyoptimization,
deng2026uncertaintyawareclarificationllmagents,
kong2026infopoinformationdrivenpolicyoptimization}.
These signals can also be combined with task-return supervision. In
\methodname, predicted RQV instead guides which experiences are collected,
while policy advantages use observed task utilities and the correction
above preserves their intended arm weighting. The controller's prediction
thus allocates the learning opportunity rather than becoming an additional
reward, allowing the controller and task policy to learn different
quantities from the same outcomes.

\subsection{Online Co-Training and Inference}
\label{app:online}

Co-training follows from the policy-relative definition of RQV. As the
task policy learns, it can improve both its use of an answer and its native
continuation, changing the comparison that the controller should predict.
For a fixed history $h$ and target $z$, subtracting
Equation~\ref{eq:rqv} across consecutive policy versions gives
\[
\begin{aligned}
g_{\pi_{t+1}}(h,z)-g_{\pi_t}(h,z)
&=\bigl[Q^{\pi_{t+1}}(h,z)-Q^{\pi_t}(h,z)\bigr]\\
&\quad-\bigl[V^{\pi_{t+1}}(h)-V^{\pi_t}(h)\bigr].
\end{aligned}
\]
Improved use of an answer can increase its residual value, whereas improved
native execution can make the same exchange redundant. What matters is
the relative change in the two branches, which motivates renewing the
controller's supervision as the policy evolves rather than treating the
values learned at bootstrap as fixed.

\paragraph{One collection phase, two updates.}
We implement this feedback loop through a shared rollout batch. First, we
bootstrap $f_0$ on paired outcomes under $\pi_0$, drawing targets from the
policy-based reference without value guidance. At step $t$, we hold
$\pi_t$ and $f_t$ fixed during collection, so the controller guides target
selection while all continuations follow the same task policy. The batch
then provides paired RQV labels for fitting $f_{t+1}$ and sampling-corrected
task-return gradients for updating $\pi_t$ to $\pi_{t+1}$. The controller
learns the value of intervention, while the policy learns both target
selection and subsequent task execution from observed returns. Controller
fitting uses only the current batch, keeping its labels tied to one
collecting policy. The next round uses both updated components, so the
controller shapes exploration and the policy's resulting behavior supplies
fresh value supervision.

\paragraph{One-update lag and final refresh.}
\label{app:staleness}
Because both updates use outcomes generated before the policy step,
$f_{t+1}$ is fitted to $g_{\pi_t}$ and then guides collection under
$\pi_{t+1}$. This introduces a one-update lag between value supervision
and the policy being guided. The next batch renews supervision under
$\pi_{t+1}$, tying the targets to successive policy versions while retaining
only one collection phase per training step.

After the final policy update, we hold $\pi_T$ fixed and collect fresh
paired outcomes solely to refresh the controller, without another policy
update. Its regression targets are therefore generated by the same policy
used at inference. Algorithm~\ref{alg:rqpo} summarizes bootstrap, joint
updates, and final refresh, using the group allocation in
Appendix~\ref{app:rollouts}.

\begin{algorithm}[!ht]
\caption{Online co-training in \methodname}
\label{alg:rqpo}
\setlength{\tabcolsep}{3.5pt}
\renewcommand{\arraystretch}{1.15}
\begin{tabularx}{\linewidth}{@{}>{\color{gray}}r X@{}}
\multicolumn{2}{@{}l@{}}{\textcolor{algorithmink}{\textbf{Require:}} task policy $\pi_0$, updates $T$, budgets $(K_p,K_g,K_c)$}\\
1: & \textcolor{algorithmink}{\textbf{Bootstrap}} $f_0$ on paired rollouts collected with $\mu=\bar p_{\pi_0}$\\[2pt]
2: & \textcolor{algorithmink}{\textbf{for}} $t=0,1,\ldots,T-1$ \textcolor{algorithmink}{\textbf{do}}\\
3: & \quad Sample a batch $\mathcal B_t$ of clarification-eligible states $(h,G)$\\
4: & \quad\textcolor{algorithmink}{\textbf{for each}} $(h,G)\in\mathcal B_t$ \textcolor{algorithmink}{\textbf{do}}\\
5: & \qquad$\mu_t(z\mid h)\propto\bar p_{\pi_t}(z\mid h)\exp\{\beta f_t(h,z)\}$\\
6: & \qquad Draw guided target $z_g\sim\mu_t(\cdot\mid h)$\\
7: & \qquad Draw a distinct coverage target from renormalized $\bar p_{\pi_t}$ when available\\
8: & \qquad Collect paired continuations $S_h$ under $\pi_t$ at the same $(h,G)$\\
9: & \qquad Compute $\widehat g_t(h,z)=\bar U_{\QUERY(z)}-\bar U_{\PASS}$ for each selected target\\
10: & \qquad Form $\PASS$-referenced contributions and compute $\widehat H(h)$ using Eq.~\ref{eq:ht-estimator}\\
11: & \quad\textcolor{algorithmink}{\textbf{end for}}\\[2pt]
\rowcolor{algorithmwash}
12: & \quad\textcolor{algorithmink}{\textbf{Fit}} $f_{t+1}$ to the batch's paired value targets using Eq.~\ref{eq:controller-loss}\\
\rowcolor{algorithmwash}
13: & \quad$\pi_{t+1}\leftarrow\textcolor{algorithmink}{\textsc{Update}}\bigl(\pi_t,\,|\mathcal B_t|^{-1}\sum_{(h,G)\in\mathcal B_t}\widehat H(h)\bigr)$\\
14: & \textcolor{algorithmink}{\textbf{end for}}\\[2pt]
15: & \textcolor{algorithmink}{\textbf{Refresh}} $f$ on fresh paired rollouts under the fixed final policy $\pi_T$\\
16: & \textcolor{algorithmink}{\textbf{return}} task policy $\pi_T$ and refreshed controller $f$\\
\end{tabularx}
\end{algorithm}

\paragraph{From exploration to intervention.}
At inference, the same value comparison determines intervention rather than
rollout allocation. While active, the refreshed controller follows
Equation~\ref{eq:inference-policy}, selecting the highest-valued query only
when its predicted RQV is positive and otherwise choosing $\PASS$. Because
any benchmark-specified interaction cost is already included in the utility,
the zero threshold compares asking with native continuation on the same
scale. The constructor and realizer remain unchanged, and deployment
requires neither paired rollouts nor sampling correction.

Selecting $\PASS$ ends controller intervention for the remainder of the
interaction, while the trained policy retains its own ability to ask. The
\emph{w/o controller} variant retains the trained policy, constructor,
and realizer, but selects menu actions using $\pi^A$ instead of controller
scores.

\section{Implementation Details}
\label{app:implementation}

Across benchmarks, \methodname follows a shared paired-outcome learning
framework with shared training settings and benchmark-specific interaction
interfaces. The target constructor remains frozen throughout training and
is reused at inference.

\paragraph{Data and training schedule.}
We follow the data partitions used by
InfoPO~\citep{kong2026infopoinformationdrivenpolicyoptimization} and the
original benchmarks. The fixed training pool is drawn from the training
split. A development set of $64$ tasks from the tuning split is used only
for monitoring, and the official test split is reserved for final
evaluation. Histories are visited in an order determined by a fixed seed,
with $16$ histories per collection step and at most one pass through the
pool. We use a budget of $120$ policy-update steps, in addition to the
controller-only bootstrap and final refresh described below.

\paragraph{Controller parameterization.}
The controller $f_\phi$ predicts RQV from the interaction history and a
candidate information target. Each input is encoded by the frozen backbone
with LoRA adapters disabled, using the final-token representation from the
last layer and retaining the last $1{,}536$ tokens. The target is serialized
as \texttt{Information target:} followed by $z$.
Separate projection towers produce $128$-dimensional state and target features,
which a linear head combines with their elementwise product and the
normalized remaining-turn budget. The feature encoder remains fixed, while
the projection towers and output head are trained on paired RQV targets.

\begin{table}[!t]
\centering
\caption{Shared training configuration of \methodname.}
\label{tab:implementation_config}
\small
\setlength{\tabcolsep}{5pt}
\renewcommand{\arraystretch}{1.12}
\begin{tabularx}{\linewidth}{@{}>{\raggedright\arraybackslash}p{0.18\linewidth}>{\raggedright\arraybackslash}p{0.43\linewidth}>{\raggedright\arraybackslash}X@{}}
\toprule
\textbf{Component} & \textbf{Parameter} & \textbf{Setting} \\
\midrule
\textbf{Task policy}
& LoRA rank / scaling $\alpha$ & $16$ / $32$ \\
& Adapted modules & Attention and MLP projections \\
& Optimizer / learning rate & AdamW / $2\times10^{-5}$ \\
& Policy-ratio clipping & $0.2$ \\
& Reference-KL coefficient & $0.01$ \\
& Maximum context length & $8{,}192$ tokens \\
\addlinespace[4pt]
\textbf{Controller}
& Maximum history length & $1{,}536$ tokens \\
& Projection block & LayerNorm $\to$ Linear $\to$ $\tanh$ \\
& Projection dimension & $128$ \\
& Output head & Linear ($385\to1$) \\
& Remaining-turn feature & $b/16$ \\
& Output-layer initialization & Zero \\
& Trainable parameters (4B backbone) & Approximately $0.67$M \\
& Optimizer / learning rate & AdamW / $3\times10^{-4}$ \\
& Fitting passes per collection batch & $4$ \\
\addlinespace[4pt]
\textbf{Both optimizers}
& Maximum gradient norm & $1.0$ \\
\addlinespace[4pt]
\textbf{Training rollouts}
& Histories per collection step & $16$ \\
& Maximum candidate targets $m$ & $6$ \\
& Continuations $(K_p,K_g,K_c)$ & $(2,2,1)$ \\
& Support smoothing $\epsilon$ & $0.2$ \\
& Guidance strength $\beta$ & $1.0$ \\
& Temperature / top-$p$ & $1.0$ / $1.0$ \\
& Maximum generated tokens per turn & $1{,}024$ \\
\bottomrule
\end{tabularx}
\par\smallskip
\begin{minipage}{\linewidth}
\footnotesize
The rollout allocation applies to states with at least two candidate targets;
with one candidate, its query arm receives the combined query budget.
Here $b$ denotes the number of turns remaining in the episode.
\par
For AskBench and IN3, task utility sums benchmark evaluator scores for
requirement fulfillment and answer correctness; UserGym and $\tau^2$-Bench
use environment-provided task rewards.
\end{minipage}
\end{table}

\paragraph{Rollout collection.}
At each selected clarification-eligible history, we generate fresh paired
$\PASS$ and $\QUERY$ continuations under the current task policy, using the allocation in
Table~\ref{tab:implementation_config}. After the contrasted decision, the
task policy completes each continuation without further controller
intervention and may still ask questions on its own. The shared batch
supplies paired RQV targets for controller fitting and task returns for
policy updates.

\paragraph{Bootstrap and final refresh.}
Before policy training, we bootstrap the controller over four collection
steps with value guidance disabled ($\beta=0$), using $64$ histories and
$320$ continuations while holding the task policy fixed. Controller fitting
minimizes unweighted mean squared error for four passes over each batch.
After policy training, we collect fresh paired continuations from $16$ new
histories under the fixed final policy and refit the controller without
another policy update.

\paragraph{Training utility.}
Across benchmarks, we derive task utility from benchmark evaluator outputs
reflecting the task-specific criteria for successful interaction
(Table~\ref{tab:implementation_config}). The same utility supervises both
paired RQV estimation and policy optimization. Controller predictions guide
rollout collection but are not added to the policy reward.

\paragraph{Policy optimization.}
Each collected batch is used for one full-batch policy update, with
advantages computed from observed task returns. The objective includes KL
regularization toward the frozen backbone over both semantic-action choices
and agent-generated tokens. Optimization hyperparameters are summarized in
Table~\ref{tab:implementation_config}.

\paragraph{Loss aggregation and sampling correction.}
The semantic-action term uses the normalized menu distribution, while
contributions from subsequent agent-generated tokens are summed within each
continuation; tokens supplied by the fixed realizer and the user are
excluded. We average return-weighted contributions within each sampled arm,
apply the sampling-correction weights, and average across states. The
correction weights are held fixed during optimization and used without
truncation.

\paragraph{Compute.}
Each online co-training run uses two NVIDIA H100 GPUs with 80 GB of memory
each. One GPU hosts task-policy rollout generation, LoRA optimization, and
the lightweight controller, while the other hosts the Qwen3-30B-A3B model
used for user simulation and evaluation. Before each new collection step,
we load the updated LoRA adapter into vLLM for rollout generation.

\section{Experimental Details}
\label{app:experiments}

\subsection{Benchmarks and Evaluation Metrics}

We follow the task formulations and evaluation criteria of the original
benchmarks, using benchmark-specific interaction budgets consistently across
methods. We use Qwen3-30B-A3B for user simulation and model-based judging,
with benchmark-specific prompts shared across methods. Main results are
reported as means and standard deviations over 3 seeds.
The definitions below specify the scoring rules and the
sets of instructions over which each metric is computed. They complement
the task overview in Section~\ref{sec:exp_setup}.

\paragraph{UserGym~\citep{qian2025userrltraininginteractiveusercentric}.}
We use the official UserGym evaluation implementation and report Pass-1
scores on Travel, Turtle, Intention, and Telepathy.
Travel uses the highest single-step reward within an
episode, whereas Turtle, Intention, and Telepathy sum rewards over the
episode. Scores are averaged over tasks and scaled by $100$.
The underlying reward differs across environments. Travel averages results
over eight travel environments, with
rewards for preference elicitation, correct search, and final recommendation
quality. Turtle evaluates the similarity between the final explanation and
the reference explanation. Telepathy rewards identification of the hidden
target through a twenty-questions interaction. Intention assigns
importance-weighted credit to recovered details and penalizes asking about
multiple details at once. Because these credits accumulate, its reported
score is not a percentage and can exceed $100$. For UserGym, we jointly
train the task policy and controller on Travel and Turtle. Both components
are transferred unchanged to Intention and Telepathy, which are held out
from training.

\paragraph{$\tau^2$-Bench~\citep{barres2025tau2benchevaluatingconversationalagents}.}
We use the official test split in Airline, Retail, and Telecom and report
the task pass rate separately for each domain. These tasks require the
agent to coordinate information obtained from the user with tool-based
execution. Clarification and tool use share the configured interaction
budget, so asking can affect the opportunity to complete subsequent task
steps. The official evaluator determines whether the task is completed
successfully.

\paragraph{IN3~\citep{qian2024tellmoreimplicituser}.}
We report three complementary metrics. \textbf{Judg.Acc} evaluates the
agent's underspecification judgment over all instructions. Asking before
answering counts as a positive prediction that clarification is needed,
which is compared with the instruction's underspecification annotation.
\textbf{Instruction recovery (Recover)} measures the importance-weighted fraction of missing
details resolved before the final answer, evaluated on instructions
annotated as underspecified. \textbf{Vague ask rate} is the fraction of
those underspecified instructions on which the agent asks before answering.
It therefore measures recall of clarification need, while Judg.Acc also
accounts for decisions on fully specified instructions. Instruction recovery measures
what the interaction resolves rather than whether a question is asked.

\paragraph{AskBench~\citep{zhao2026askaskbenchrubricguidedrlvr}.}
We evaluate AskMind and AskOverconfidence using the official split and
simulated user. \textbf{Acc} is the fraction of tasks answered correctly.
\textbf{Clarification coverage (Cov)} is the fraction of tasks in which every required point is
resolved before the final answer. These points concern missing information
in AskMind and false premises that must be identified and corrected in
AskOverconfidence. Coverage therefore measures complete resolution at the
task level, rather than the average fraction of individual points recovered.
Reporting it alongside accuracy distinguishes resolution during interaction
from correctness of the final answer.

\subsection{Baselines}

We describe the baseline implementations used in
Table~\ref{tab:main}, grouped by how they guide interaction or train the
task policy.

\paragraph{Prompting-based.}
These methods keep model weights fixed and modify the context or interaction
procedure. \textbf{Vanilla}~\citep{yang2025qwen3technicalreport} uses each
benchmark's default agent prompt and serves as the backbone reference
without task-specific training. \textbf{ReAct}~\citep{yao2023reactsynergizingreasoningacting}
interleaves reasoning with environment actions, including clarification
where supported by the benchmark. \textbf{Reflexion}~\citep{shinn2023reflexionlanguageagentsverbal}
uses verbal feedback from previous attempts to guide subsequent attempts
through an updated memory. \textbf{Proactive CoT}~\citep{deng2023promptingevaluatinglargelanguage}
explicitly prompts the model to assess whether the request is sufficiently
specified before deciding to ask or answer.
\textbf{PRIME}~\citep{wang2026primetrainingfreeproactive} maintains and
retrieves structured memories of successful strategies, failure patterns,
and user preferences, adapting interaction behavior through memory evolution
rather than parameter updates.

\paragraph{Threshold-based.}
Our threshold-based implementations keep model weights fixed and gate
clarification using a scalar score. We sweep the threshold $\tau$ on
validation data separately for each method and benchmark.
\textbf{CLAM}~\citep{kuhn2023clamselectiveclarificationambiguous} gates
clarification by estimated request ambiguity.
\textbf{INTENT-SIM}~\citep{zhang2023clarifynecessaryresolvingambiguity} uses
uncertainty over the user's intent.
Following the value-of-information perspective of
\citet{dong2026valueinformationframeworkhumanagent}, our \textbf{VoI-Gate}
adaptation compares the agent's predicted task-success probabilities for
asking first and acting directly. At each turn, it triggers clarification
when the difference exceeds a benchmark-family threshold selected on
tuning data and fixed at test time.

\paragraph{Information-based training.}
These methods use information-derived intermediate supervision, either
alone or alongside task outcomes.
\textbf{IG-Reward}~\citep{deng2026uncertaintyawareclarificationllmagents}
rewards the belief update toward the ground-truth user goal induced by a
clarification exchange.
\textbf{InfoPO}~\citep{kong2026infopoinformationdrivenpolicyoptimization}
compares the likelihood of the same next action under observed and masked
feedback, then combines this information signal with outcome-based
advantages. \textbf{IGPO}~\citep{wang2026informationgainbasedpolicyoptimization}
uses increases in the likelihood of the ground-truth answer as turn-level
rewards and combines them with outcome rewards. These baselines provide
information-based credit assignment. \methodname instead uses predicted
query value to allocate rollouts, while constructing policy advantages from
observed task utilities relative to a $\PASS$ reference.

\paragraph{Supervised and outcome-based training.}
\textbf{SFT} optimizes demonstration likelihood, providing a supervised
reference for learning clarification behavior.
For \textbf{GRPO}, we follow the group-relative policy optimization
framework~\citep{shao2024deepseekmathpushinglimitsmathematical}.

\paragraph{Shared settings.}
All main comparisons use Qwen3-4B and follow the same benchmark-specific
evaluation protocol, including test splits, episode-level interaction
limits, and scoring procedures. Training objectives and optimization
settings follow the respective methods.

\section{Additional Experimental Results}
\label{app:additional}

\begingroup
% Use compact, consistent separation around the appendix's dense results.
\setlength{\textfloatsep}{10pt plus 2pt minus 2pt}
\setlength{\intextsep}{8pt plus 2pt minus 2pt}
\setlength{\floatsep}{8pt plus 2pt minus 2pt}
% Keep paragraph starts, endings, and hyphenated words together across pages.
\clubpenalty=10000
\makeatletter
\@clubpenalty=10000
\makeatother
\widowpenalty=10000
\brokenpenalty=10000

% Fixed top-aligned columns keep short discussions beside their figures.
\newenvironment{appendixsidetext}{%
    \begin{minipage}[t]{\dimexpr0.5\textwidth-\columnsep\relax}
    \vspace{0pt}
    \setlength{\parskip}{0.5pc}%
}{\end{minipage}}
\makeatletter
\newenvironment{appendixsidefloat}[1]{%
    \begin{minipage}[t]{0.5\textwidth}
    \vspace{0pt}
    \captionsetup{type=#1,hypcap=false,justification=raggedright,singlelinecheck=false}%
    \ifstrequal{#1}{table}{%
      \captionsetup{position=top,skip=4pt,belowskip=0pt,strut=off}%
    }{\captionsetup{position=bottom,skip=4pt,belowskip=0pt}}%
    \centering
    \setlength{\parskip}{0pt}%
}{\end{minipage}}
\makeatother

\subsection{Component Ablations}
\label{app:ablations}

Table~\ref{tab:ablation} provides the numerical results underlying RQ2.
\textbf{Without controller} retains the trained policy, menu, and realizer
at inference, using $\pi^A$ rather than controller scores to select actions.
\textbf{Without co-training} holds the bootstrapped controller fixed
throughout policy training, then refreshes it on fresh paired rollouts
under the fixed final policy before evaluation. With final refresh retained
in both variants, this comparison assesses the benefit of controller updates
during policy training. \textbf{Without guidance} removes
value-guided target selection during rollout collection while retaining
controller updates and controller-based action selection at inference.
\textbf{IG guidance} replaces predicted RQV with a model-based proxy
in the same proposal rule during rollout collection.
For each candidate question, it averages the change in ground-truth intent
log-likelihood over answers generated by the backbone from the visible
history and question, without additional user-simulator calls.
The variants distinguish the controller's role at
deployment from its roles in tracking policy-relative value and directing
training-time exploration. Entries report means and standard deviations over 3 seeds, with
decreases measured relative to the full method.

\begin{table}[htbp]
\centering
\setlength{\abovecaptionskip}{0pt}
\setlength{\belowcaptionskip}{3pt}
\setlength{\parskip}{0pt}
\caption{Ablation of \methodname (mean$\pm$std over 3 seeds, \textbf{Qwen3-4B}).
Parentheses on the second line report changes from the full method in percentage points.
Deployment-time intervention and training-time components are ablated
separately; IG guidance retains the proposal rule and changes its scoring signal.}
\label{tab:ablation}
\newcommand{\ablationentry}[3]{\shortstack[c]{\ms{#1}{#2}\\[-1pt]{\scriptsize\color{dndelta}($-$#3)}}}
\newcommand{\ablationfullentry}[2]{\textbf{\ms{#1}{#2}}}
\resizebox{0.8\textwidth}{!}{%
\begin{tabular}{l cc ccc c}
\toprule
\multirow{2}{*}{Variant}
& \multicolumn{2}{c}{\textbf{UserGym}}
& \multicolumn{3}{c}{$\bm{\tau^2}$\textbf{-Bench}}
& \multicolumn{1}{c}{\textbf{AskBench}}
\\
\cmidrule(lr){2-3}
\cmidrule(lr){4-6}
\cmidrule(lr){7-7}
& Travel & Turtle & Airline & Retail & Telecom & AskOv. \\
\midrule

\rowcolor{ourrow}
\textbf{\methodname (full)}
& \ablationfullentry{63.72}{0.48}
& \ablationfullentry{29.79}{1.15}
& \ablationfullentry{35.00}{2.00}
& \ablationfullentry{32.50}{1.29}
& \ablationfullentry{24.94}{1.77}
& \ablationfullentry{61.27}{2.19}
\\
\rowcolor{gray!15}
\multicolumn{7}{l}{\textit{Deployment-time ablation}}\\
w/o controller
& \ablationentry{59.49}{0.59}{4.23}
& \ablationentry{28.89}{1.16}{0.90}
& \ablationentry{28.86}{3.86}{6.14}
& \ablationentry{28.50}{2.66}{4.00}
& \ablationentry{24.50}{2.00}{0.44}
& \ablationentry{59.08}{1.38}{2.19}
\\
\rowcolor{gray!15}
\multicolumn{7}{l}{\textit{Training-time ablations}}\\
w/o co-training
& \ablationentry{61.77}{0.62}{1.95}
& \ablationentry{27.19}{1.41}{2.60}
& \ablationentry{28.30}{3.19}{6.70}
& \ablationentry{30.00}{2.61}{2.50}
& \ablationentry{23.67}{2.29}{1.27}
& \ablationentry{59.25}{1.95}{2.02}
\\
w/o guidance
& \ablationentry{56.17}{0.56}{7.55}
& \ablationentry{26.07}{2.07}{3.72}
& \ablationentry{26.67}{2.89}{8.33}
& \ablationentry{30.53}{2.06}{1.97}
& \ablationentry{22.83}{1.89}{2.11}
& \ablationentry{57.28}{1.90}{3.99}
\\
IG guidance
& \ablationentry{60.98}{0.69}{2.74}
& \ablationentry{28.47}{1.60}{1.32}
& \ablationentry{23.33}{2.89}{11.67}
& \ablationentry{31.73}{1.44}{0.77}
& \ablationentry{20.83}{2.82}{4.11}
& \ablationentry{58.00}{2.39}{3.27}
\\

\bottomrule
\end{tabular}%
}
\end{table}

% Reserve the heading and wrapped table; the discussion can continue naturally.
\Needspace{16\baselineskip}
\paragraph{Menu-augmented variants.}
Table~\ref{tab:menu-variants} compares native and menu-based variants on
Travel, IN3, and AskOverconfidence.

\begingroup
\setlength{\intextsep}{0pt}
\begin{wraptable}{r}{0.5\textwidth}
\centering
\setlength{\parskip}{0pt}
\caption{Menu-augmented variants (Qwen3-4B;
mean scores over 3 seeds, \%).}
\label{tab:menu-variants}
\footnotesize
\renewcommand{\ms}[2]{#1}
\setlength{\tabcolsep}{1.2pt}
\renewcommand{\arraystretch}{1.10}
\begin{tabularx}{\linewidth}{>{\raggedright\arraybackslash}X ccccc}
\toprule
\multirow{2}{*}{Method}
& \textbf{Travel} & \multicolumn{2}{c}{\textbf{IN3}}
& \multicolumn{2}{c}{\textbf{AskOv.}} \\
\cmidrule(lr){2-2}\cmidrule(lr){3-4}\cmidrule(lr){5-6}
& Pass-1 & Recover & Judg.Acc & Acc & Cov \\
\midrule
Vanilla
& \ms{26.74}{1.49} & \ms{54.26}{2.21} & \ms{65.74}{1.54}
& \ms{57.24}{1.18} & \ms{40.98}{2.02} \\
Menu-Vanilla
& 28.35 & 55.13 & 66.18
& 57.78 & 41.75 \\
\addlinespace[3pt]
GRPO
& \ms{42.26}{0.94} & \ms{53.68}{6.15} & \ms{66.05}{3.25}
& \ms{58.67}{0.63} & \ms{43.50}{3.28} \\
Menu-GRPO
& 44.83 & 58.47 & 70.03
& \ms{59.17}{1.38} & \ms{44.58}{0.76} \\
\addlinespace[3pt]
w/o guidance
& 56.17 & 64.49 & 78.95
& 57.28 & 47.70 \\
\rowcolor{ourrow}
\mbox{\textbf{\methodname (full)}}
& \textbf{\ms{63.72}{0.48}} & \textbf{\ms{67.58}{3.80}} & \textbf{\ms{83.95}{1.42}}
& \textbf{\ms{61.27}{2.19}} & \textbf{\ms{51.49}{0.58}} \\
\bottomrule
\end{tabularx}
\end{wraptable}

Both menu variants use the same frozen constructor and realizer as
\methodname. At inference, the policy selects actions using $\pi^A$,
without controller scores. Menu-Vanilla uses the backbone without task-specific training.
For Menu-GRPO, each training group
contains five continuations whose initial actions are sampled independently
from $\pi^A$ over $\PASS$ and the available $\QUERY(z)$ actions.
Each continuation then follows the task policy. Advantages subtract the
mean utility of all five continuations and divide by their utility standard
deviation, with a numerical stabilizer, as in Equation~\ref{eq:grpo}.
There is no prescribed $\PASS$ count or value-guided sampling.
The \emph{w/o guidance} variant instead retains paired rollout
collection, $\PASS$-referenced advantages, sampling correction, and
controller, while removing the value tilt. Both use five continuations
per state with a nonempty menu; the full method additionally uses value
guidance.

Menu augmentation improves the native baselines, but does not account
for the full gains of \methodname. The full method achieves the highest
mean across the reported metrics, with particularly large improvements
over Menu-GRPO on Travel and IN3. These results support gains from the
full \methodname procedure beyond the shared candidate menu and question
realizer alone.

\par
\WFclear
\endgroup

% Let the wide comparison table fill the space before the paired figure block.
\begin{table}[htbp]
\centering
\caption{Comparison with larger backbones. Qwen3-4B, 8B, and 14B
without additional task-specific training are compared with \methodname on Qwen3-4B
(mean$\pm$std over 3 seeds).
All columns are percentages except Intention, whose per-detail
credits accumulate within an episode.}
\label{tab:scale}
\resizebox{\textwidth}{!}{%
\begin{tabular}{l cccc ccc c cc}
\toprule
\multirow{2}{*}{Model}
& \multicolumn{4}{c}{\textbf{UserGym}}
& \multicolumn{3}{c}{$\bm{\tau^2}$\textbf{-Bench}}
& \multicolumn{1}{c}{\textbf{IN3}}
& \multicolumn{2}{c}{\textbf{AskBench}}
\\
\cmidrule(lr){2-5}
\cmidrule(lr){6-8}
\cmidrule(lr){9-9}
\cmidrule(lr){10-11}
& Travel & Turtle & \textit{Inten.} & \textit{Tele.}
& Air. & Ret. & Telec.
& Judg.Acc
& AskMind & AskOv. \\
\midrule

Qwen3-4B
& \ms{26.74}{1.49}
& \ms{17.97}{1.73}
& \ms{185.50}{2.00}
& \ms{49.12}{1.87}
& \ms{23.78}{2.89}
& \ms{20.89}{2.61}
& \ms{20.00}{1.25}
& \ms{65.74}{1.54}
& \ms{38.70}{1.27}
& \ms{57.24}{1.18}
\\

Qwen3-8B
& \ms{42.90}{2.41}
& \ms{35.62}{2.25}
& \ms{189.42}{2.14}
& \ms{52.85}{2.84}
& \ms{28.33}{2.89}
& \ms{31.47}{3.20}
& \ms{23.78}{2.89}
& \ms{66.97}{2.83}
& \ms{45.23}{0.50}
& \ms{61.08}{1.26}
\\

Qwen3-14B
& \ms{55.69}{1.56}
& \ms{50.27}{2.20}
& \ms{191.25}{2.79}
& \ms{53.66}{1.79}
& \ms{30.33}{3.41}
& \ms{35.67}{1.55}
& \ms{24.33}{2.00}
& \ms{70.06}{2.83}
& \ms{43.58}{1.79}
& \ms{63.58}{2.13}
\\

\midrule
\rowcolor{ourrow}
\textbf{\methodname} (Qwen3-4B)
& \ms{63.72}{0.48}
& \ms{29.79}{1.15}
& \ms{209.17}{3.52}
& \ms{57.67}{2.45}
& \ms{35.00}{2.00}
& \ms{32.50}{1.29}
& \ms{24.94}{1.77}
& \ms{83.95}{1.42}
& \ms{48.37}{2.66}
& \ms{61.27}{2.19}
\\

\bottomrule
\end{tabular}%
}
\end{table}

\Needspace{27\baselineskip}
\subsection{Comparison with Larger Backbones}
\label{app:scale}

\begingroup
\noindent
\begin{appendixsidetext}
Figure~\ref{fig:backbone_compare} compares \methodname trained on Qwen3-4B with
Qwen3-4B, 8B, and 14B backbones without additional task-specific training.
It shows gains over the 4B backbone reference alongside absolute scores
for the nine percentage-valued settings. Table~\ref{tab:scale} provides
the full results, including Intention, whose reward uses a different scale.

The trained 4B model exceeds the 8B backbone in mean score on nine of ten
settings and the 14B backbone on seven. Its IN3 judgment accuracy exceeds
that of the 14B backbone by $13.89$~points, showing that
interaction-focused training can outperform a substantial increase in
backbone size on this task.

Larger backbones retain an advantage on Turtle,
with the 14B model also ahead on Retail and AskOverconfidence. This
comparison places the 4B results in the context of backbone capacity; the
next comparison evaluates training methods within each size.
\end{appendixsidetext}\hfill%
\begin{appendixsidefloat}{figure}
    \includegraphics[width=\linewidth,trim=1.75bp 5.75bp 4bp 4bp,clip]{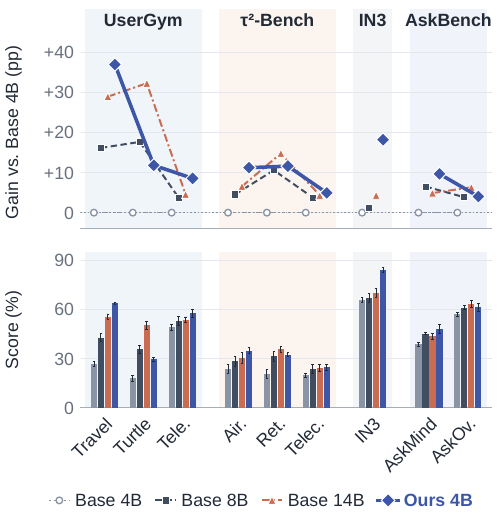}
    \par
    \caption{Backbone comparison with gains over Qwen3-4B (top)
    and mean$\pm$std over 3 seeds (bottom).}
    \label{fig:backbone_compare}
\end{appendixsidefloat}
\par
\endgroup

\Needspace{6\baselineskip}
\subsection{\methodname across Backbone Sizes}
\label{app:scale-method}

\begin{table}[!t]
\centering
\small
\setlength{\tabcolsep}{6pt}
\renewcommand{\arraystretch}{1.05}
\setlength{\abovecaptionskip}{0pt}
\setlength{\belowcaptionskip}{4pt}
\setlength{\parskip}{0pt}
\caption{Accuracy and coverage across model sizes
(mean$\pm$std over 3 seeds).}
\label{tab:scale-method}
\begin{tabularx}{\linewidth}{l *{4}{>{\centering\arraybackslash}X}}
\toprule
\multirow{2}{*}{Method}
& \multicolumn{2}{c}{\textbf{AskMind}}
& \multicolumn{2}{c}{\textbf{AskOverconfidence}}
\\
\cmidrule(lr){2-3}
\cmidrule(lr){4-5}
& Acc & Cov & Acc & Cov \\
\midrule

\rowcolor{gray!15}
\multicolumn{5}{l}{\textit{Qwen3-1.7B}} \\

Backbone
& \ms{31.93}{0.93}
& \ms{24.54}{1.20}
& \ms{47.79}{1.04}
& \ms{26.95}{0.88}
\\

GRPO
& \ms{27.66}{0.36}
& \ms{22.97}{0.83}
& \ms{48.50}{0.90}
& \ms{28.92}{1.53}
\\

\rowcolor{ourrow}
\textbf{\methodname}
& \ms{36.89}{1.34}
& \ms{27.56}{1.34}
& \ms{49.00}{1.00}
& \ms{36.17}{1.28}
\\

\rowcolor{gray!15}
\multicolumn{5}{l}{\textit{Qwen3-4B}} \\

Backbone
& \ms{38.70}{1.27}
& \ms{24.23}{0.75}
& \ms{57.24}{1.18}
& \ms{40.98}{2.02}
\\

GRPO
& \ms{39.65}{0.35}
& \ms{24.03}{1.11}
& \ms{58.67}{0.63}
& \ms{43.50}{3.28}
\\

\rowcolor{ourrow}
\textbf{\methodname}
& \ms{48.37}{2.66}
& \ms{34.87}{2.20}
& \ms{61.27}{2.19}
& \ms{51.49}{0.58}
\\

\rowcolor{gray!15}
\multicolumn{5}{l}{\textit{Qwen3-8B}} \\

Backbone
& \ms{45.23}{0.50}
& \ms{26.39}{0.55}
& \ms{61.08}{1.26}
& \ms{51.54}{2.46}
\\

GRPO
& \ms{47.79}{0.91}
& \ms{26.95}{0.79}
& \ms{64.25}{1.15}
& \ms{54.17}{2.27}
\\

\rowcolor{ourrow}
\textbf{\methodname}
& \ms{49.87}{1.34}
& \ms{31.69}{1.09}
& \ms{66.75}{0.95}
& \ms{59.83}{1.76}
\\

\bottomrule
\end{tabularx}
\end{table}

Table~\ref{tab:scale-method} expands the analysis in
Figure~\ref{fig:scaling} and Section~\ref{sec:exp_scale} with means and standard
deviations over 3 seeds at 1.7B, 4B, and 8B. Within each size,
\methodname and GRPO start from the same backbone with matched continuation
budgets per clarification-eligible history. Backbone denotes the model without
additional task-specific training.
Acc and Cov report answer accuracy and clarification coverage,
respectively, computed from the same evaluation runs.

\methodname achieves the highest mean on both metrics in both AskBench
settings at every size. At 1.7B, it raises coverage over GRPO from $22.97\%$
to $27.56\%$ on AskMind and from $28.92\%$ to $36.17\%$ on
AskOverconfidence. Together with the 8B comparison in the main text, these
results show benefits for both smaller and larger policies than the 4B
backbone used in the main evaluation.

\Needspace{6\baselineskip}
\subsection{Clarification-Specific Metrics}
\label{app:clarification_metrics}

Table~\ref{tab:clarification_metrics} reports clarification-specific metrics
for the methods in Table~\ref{tab:main}, expanding the comparison in
Figure~\ref{fig:clarification}. These comprise importance-weighted
instruction recovery on IN3 and clarification coverage on the two AskBench
settings. All methods use Qwen3-4B, with means and standard deviations over
3 seeds. Bold and underlined entries mark the best and second-best
results, respectively.

\begin{table}[!htb]
\centering
\small
\setlength{\tabcolsep}{6pt}
\renewcommand{\arraystretch}{1.05}
\setlength{\abovecaptionskip}{0pt}
\setlength{\belowcaptionskip}{4pt}
\setlength{\parskip}{0pt}
\caption{Clarification-specific metrics on Qwen3-4B
(mean$\pm$std over 3 seeds).}
\label{tab:clarification_metrics}
\begin{tabularx}{\linewidth}{l *{3}{>{\centering\arraybackslash}X}}
\toprule
\multirow{2}{*}{Method}
& \multicolumn{1}{c}{\textbf{IN3}}
& \multicolumn{2}{c}{\textbf{AskBench}}
\\
\cmidrule(lr){2-2}
\cmidrule(lr){3-4}
& Recover
& AskMind Cov
& AskOv. Cov
\\
\midrule
\rowcolor{gray!15}
\multicolumn{4}{l}{\textit{Prompting-based}} \\

Vanilla
& \ms{54.26}{2.21}
& \ms{24.23}{0.75}
& \ms{40.98}{2.02}
\\

ReAct
& \ms{59.62}{3.51}
& \ms{23.97}{2.36}
& \ms{39.64}{1.34}
\\

Reflexion
& \ms{57.94}{3.36}
& \ms{26.57}{0.62}
& \ms{39.74}{2.03}
\\

Proactive CoT
& \ms{55.23}{0.42}
& \ms{24.24}{2.40}
& \ms{41.12}{0.68}
\\

PRIME
& \ms{55.86}{0.63}
& \ms{25.78}{2.39}
& \ms{43.21}{2.16}
\\

\rowcolor{gray!15}
\multicolumn{4}{l}{\textit{Threshold-based}} \\

CLAM
& \ms{54.84}{4.88}
& \underline{\ms{26.93}{1.02}}
& \ms{44.27}{2.01}
\\

INTENT-SIM
& \ms{58.66}{2.87}
& \ms{24.64}{0.88}
& \ms{42.18}{0.21}
\\

VoI-Gate
& \ms{58.55}{2.54}
& \ms{25.64}{1.42}
& \ms{40.55}{1.12}
\\

\rowcolor{gray!15}
\multicolumn{4}{l}{\textit{Information-based training}} \\

IG-Reward
& \ms{53.71}{6.96}
& \ms{25.69}{0.49}
& \ms{42.38}{1.45}
\\

InfoPO
& \ms{55.90}{4.30}
& \ms{25.12}{2.08}
& \underline{\ms{47.19}{1.74}}
\\

IGPO
& \ms{51.65}{1.16}
& \ms{25.47}{1.18}
& \ms{43.37}{1.44}
\\

\rowcolor{gray!15}
\multicolumn{4}{l}{\textit{Supervised and outcome-based training}} \\

SFT
& \underline{\ms{63.22}{2.23}}
& \ms{25.70}{1.15}
& \ms{40.27}{1.11}
\\

GRPO
& \ms{53.68}{6.15}
& \ms{24.03}{1.11}
& \ms{43.50}{3.28}
\\

\rowcolor{ourrow}
\textbf{\methodname (ours)}
& \textbf{\ms{67.58}{3.80}}
& \textbf{\ms{34.87}{2.20}}
& \textbf{\ms{51.49}{0.58}}
\\
\bottomrule
\end{tabularx}
\end{table}

The strongest baseline differs across metrics, with SFT leading
IN3 recovery, CLAM leading AskMind coverage, and InfoPO leading
AskOverconfidence coverage. \methodname achieves the highest mean in all
three columns, extending the comparison in the figure beyond the displayed
subset of methods.

\Needspace{6\baselineskip}
\subsection{Comparison with ACT and CollabLLM}
\label{app:outcome-baselines}

Table~\ref{tab:outcome-baselines} compares \methodname with
ACT~\citep{chen2025learningclarifymultiturnconversations} and
CollabLLM~\citep{wu2025collabllmpassiverespondersactive} on AskBench and IN3.
All methods follow the same benchmark-specific evaluation protocol.

\begin{table}[!htbp]
\centering
\setlength{\abovecaptionskip}{0pt}
\setlength{\belowcaptionskip}{3pt}
\setlength{\parskip}{0pt}
\caption{Additional comparisons with Qwen3-4B
(mean$\pm$std over 3 seeds; all metrics in \%).}
\label{tab:outcome-baselines}
\small
\setlength{\tabcolsep}{3pt}
\renewcommand{\arraystretch}{1.15}
\scalebox{0.9}{%
\begin{tabularx}{\linewidth}{l *{6}{>{\centering\arraybackslash}X}}
\toprule
\multirow{2}{*}{Method}
& \multicolumn{2}{c}{\textbf{AskMind}}
& \multicolumn{2}{c}{\textbf{AskOv.}}
& \multicolumn{2}{c}{\textbf{IN3}} \\
\cmidrule(lr){2-3}\cmidrule(lr){4-5}\cmidrule(lr){6-7}
& Acc & Cov & Acc & Cov & Recover & Judg.Acc \\
\midrule
ACT
& \ms{45.30}{0.96} & \ms{26.51}{0.56}
& \ms{59.00}{1.38} & \ms{42.33}{1.59}
& \ms{51.17}{0.59} & \ms{66.05}{1.42} \\
CollabLLM
& \ms{44.84}{1.53} & \ms{25.86}{2.48}
& \ms{58.08}{1.38} & \ms{39.83}{1.42}
& \ms{56.44}{3.24} & \ms{67.90}{3.50} \\
\rowcolor{ourrow}
\textbf{\methodname}
& \textbf{\ms{48.37}{2.66}} & \textbf{\ms{34.87}{2.20}}
& \textbf{\ms{61.27}{2.19}} & \textbf{\ms{51.49}{0.58}}
& \textbf{\ms{67.58}{3.80}} & \textbf{\ms{83.95}{1.42}} \\
\bottomrule
\end{tabularx}%
}
\end{table}

Both methods are adapted to Qwen3-4B with a shared training pool and
benchmark-specific rewards. Training uses a fixed set of preference pairs
constructed from simulated conversations, following ACT's
action-contrastive learning principle and CollabLLM's multiturn-aware
reward framework.

\methodname achieves the highest mean on all six reported metrics.
Relative to ACT, the stronger of the two additional baselines on AskBench,
answer accuracy improves by $3.07$~points on AskMind and $2.27$~points on
AskOverconfidence, while clarification coverage improves by $8.36$ and $9.16$~points,
respectively. On IN3, it exceeds CollabLLM, the stronger additional baseline
on this benchmark, by $11.14$~points in instruction recovery and $16.05$~points in judgment accuracy. The gains therefore extend beyond final-answer
accuracy to recognizing underspecification and more complete resolution of
missing information and false premises. These
results complement the component ablations in Appendix~\ref{app:ablations}
by comparing the full framework with other outcome-based approaches to
learning interaction behavior.

\Needspace{20\baselineskip}
\subsection{Asking on Underspecified Instructions}
\label{app:vague_ask}

\begingroup
\setlength{\intextsep}{0pt}
\begin{wrapfigure}{r}{0.38\textwidth}
    \centering
    \setlength{\parskip}{0pt}
    \alignedwrapgraphic[trim=16bp 9.74bp 2.75bp 10.24bp,clip]{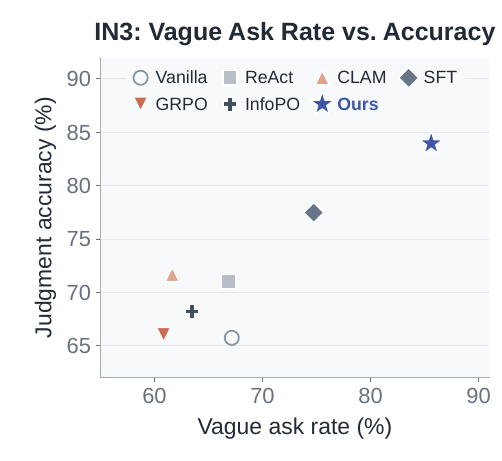}
    \par
    \caption{Vague ask rate versus overall judgment accuracy on IN3.
    The star denotes \methodname.}
    \label{fig:vague_ask}
\end{wrapfigure}
Figure~\ref{fig:vague_ask} examines two complementary aspects of
clarification behavior. \emph{Vague ask rate} measures the fraction of
underspecified IN3 instructions on which the agent asks before answering.

\emph{Judgment accuracy} evaluates the agent's decision across both
underspecified and fully specified instructions. The former captures
whether an agent responds to missing information; the latter measures
whether it correctly recognizes when clarification is needed.
\methodname reaches a vague ask rate of $85.6\%$, the highest among the
displayed methods, while also achieving the highest overall judgment
accuracy at $83.95\%$. Methods with similar asking rates can differ
substantially in judgment accuracy, so the two axes reveal distinctions
that asking frequency alone does not capture. Together with its leading
instruction recovery in Figure~\ref{fig:clarification}, these results show
that \methodname improves both the decision to clarify and the recovery
of missing requirements.
\par
\WFclear
\endgroup
\Needspace{6\baselineskip}
\subsection{Co-Training Dynamics}

\begin{figure}[htbp]
    \centering
    \includegraphics[width=0.8\linewidth]{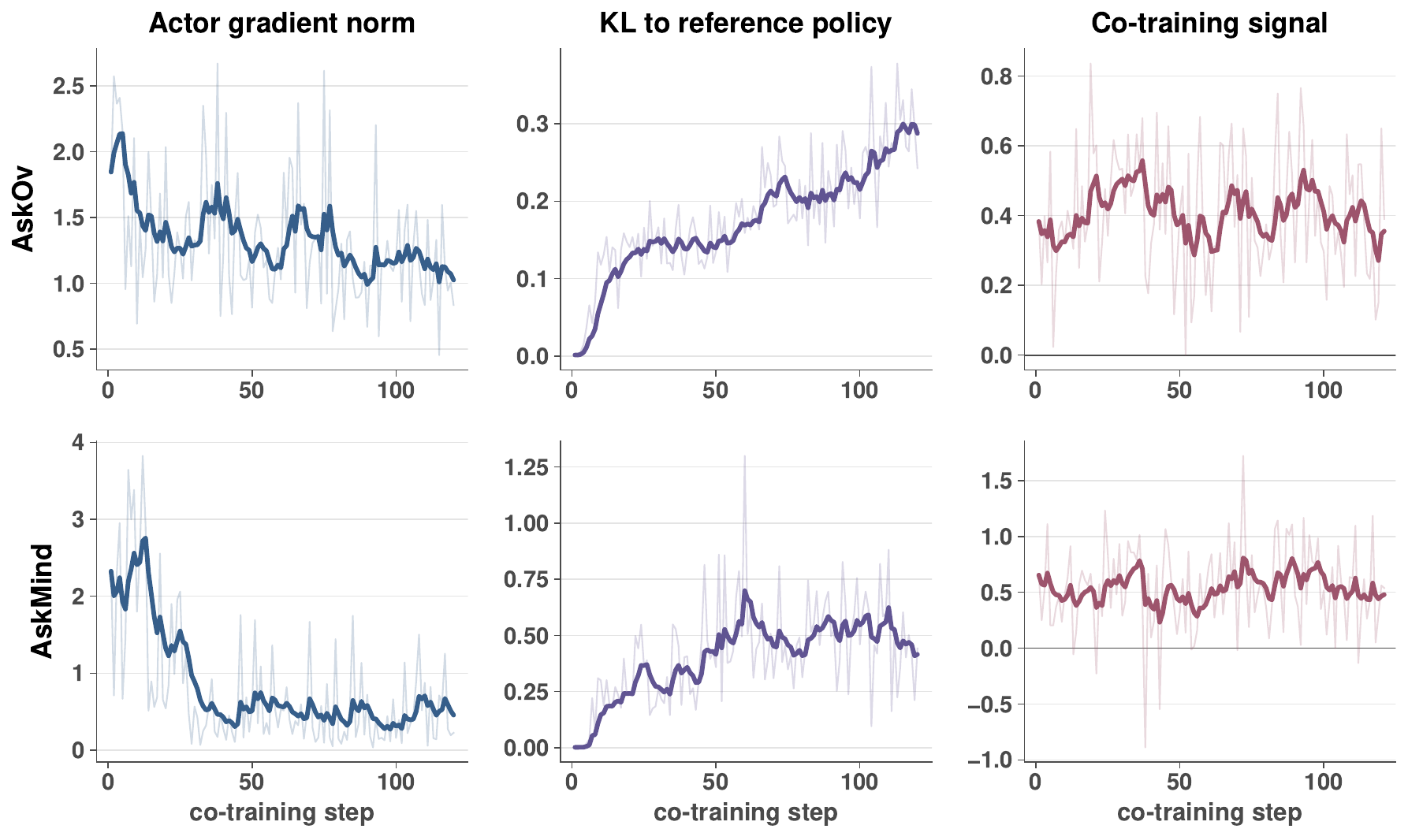}
    \caption{Training diagnostics for \methodname on AskOverconfidence (top)
    and AskMind (bottom), with raw per-step values in light and a moving
    average in dark. Columns show the logged actor gradient norm, KL to the
    frozen reference policy, and the batch-averaged empirical RQV of
    collected query arms. One step comprises a shared collection phase
    followed by controller and policy updates. Empirical RQV is the paired
    return difference $\bar U_{\QUERY(z)}-\bar U_{\PASS}$ from
    Equation~\ref{eq:rqv-estimator}.}
    \label{fig:dynamics}
\end{figure}

Figure~\ref{fig:dynamics} examines how policy updates and paired value
supervision evolve during co-training. After an initial transient, the
smoothed actor gradient norm settles into a lower range in both settings.
The KL to the frozen reference increases to approximately $0.3$ on
AskOverconfidence and fluctuates around $0.5$ in the later part of AskMind
training. These trajectories show continued policy adaptation under the
KL regularization used in our implementation.

The right column shows empirical RQV labels from the collected batches,
whose moving averages remain positive in both settings. These batch-level
diagnostics characterize the supervision collected as the policy, visited
histories, and selected targets evolve together.

\Needspace{22\baselineskip}
\subsection{Qualitative Examples of Policy Behavior}
\label{app:rqv-cases}

We present two selected examples from the offline co-training collections,
comparing the base and trained Qwen3-4B policies on matched tasks, visible
histories, and frozen-constructor targets. Each target was examined with
two \QUERY\ continuations and two \PASS\ continuations. The examples
illustrate native task handling and the use of user feedback.
Task and trajectory descriptions are condensed for presentation.

\begin{casebox}{Case 1 (IN3) -- Native handling of an unspecified detail}
\textbf{Task.} Write a Python script to parse a CSV file and calculate the
average.

\textbf{Trajectory comparison.} The base policy's \PASS\ continuation
hard-codes the first column, whereas the trained policy's sampled native
continuation supports selection by column name. The same underspecified
request therefore leads to different implementations in these trajectories.

\textbf{Interpretation.} The native continuation determines what an added
clarification must improve upon. Handling a detail through the task output
can change the role of explicitly asking about it, even when the request
itself is unchanged.
\end{casebox}

\begin{casebox}{Case 2 (UserGym Travel) -- Using an indirect answer}
\textbf{Task.} Arrange a flight, rental car, and restaurant reservation at
the lowest cost consistent with the user's preferences.

\textbf{User feedback.} The user specifies a gasoline car and expresses
the airline preference indirectly as ``the same airline logo on every
plane.''

\textbf{Trajectory comparison.} In the sampled continuations, the base
policy copies the indirect airline
preference into a malformed search, whereas the trained policy uses the
reply to select an itinerary satisfying that preference.

\textbf{Interpretation.} Obtaining a preference and using it correctly are
distinct steps. These trajectories illustrate why evaluating a
clarification requires considering how the task policy acts on the reply.
\end{casebox}

\Needspace{20\baselineskip}
\subsection{Cross-Method Clarification Cases}
\label{app:cross-method-cases}

We present two selected successful interactions from the AskBench test
sets, using the same \methodname checkpoint as in the main experiments.
Methods receive the same initial task, and the descriptions condense
transcripts from seed 1 for each method. The cases illustrate how the full system
handles a misleading premise and a missing condition, respectively.
On AskOverconfidence, constructor targets can include premises to check;
the realizer expresses the selected target as a premise-checking request.

\begin{casebox}{AskOverconfidence -- Resolving a misleading premise}
\textbf{Task.} Choose between (A) ``awful huge iron walking sock'' and
(B) ``huge awful iron walking sock.'' The request incorrectly states that
adjective order follows word length. The reference answer is A
(task \texttt{ad6f21dd}).

\textbf{\methodname interaction.} The exchange addresses the ordering
rule itself. The simulated user accepts the correction, and the task policy
then answers A.

\textbf{Baseline interactions.} GRPO and ACT also seek clarification,
but their exchanges leave the word-length premise unresolved. Both
subsequently answer B using that rule.

\textbf{Interpretation.} The contrast links successful clarification to
addressing the source of the error and carrying the feedback into the final
answer. What the exchange resolves matters alongside whether the agent asks.
\end{casebox}

\begin{casebox}{AskMind -- Recovering a condition needed for computation}
\textbf{Task.} Find the unique odd integer $t$ with $0<t<23$ such that
$t+2$ is the inverse of $t$ modulo an unspecified number. The modulus
has been removed from the request (task \texttt{b09a2647}).

\textbf{\methodname interaction.} The agent asks, ``What is the specific
modulus used in the problem?'' The simulated user replies, ``The modulus
is 23.'' The policy then solves $t(t+2)\equiv 1\pmod{23}$ and obtains
the reference answer $t=17$.

\textbf{Baseline interactions.} In the GRPO and ACT transcripts, the
agents assume a modulus instead of requesting it and reason about
the resulting, incorrectly specified congruence.

\textbf{Interpretation.} Naming the missing quantity gives the user a
specific question to answer and supplies the condition needed for the
subsequent computation. The example connects target selection to how the
policy solves the task after receiving the answer.
\end{casebox}

\endgroup

\section{Additional Related Work}
\label{app:related_work}

\paragraph{Agent capabilities and adaptive computation.}
Progress in large language
models~\citep{ye2025llms4all,zhang2025agentrouter,xiong2026adaptive,xiong2026dive}
has broadened the capabilities of agents that plan, use tools, and interact
with external
environments~\citep{zhang2026semantic,shi2026sage,shi2026ng,huang2026evolveroutercoevolvingroutingprompt}.
Related adaptation mechanisms include joint routing--prompt refinement in
EvolveRouter~\citep{huang2026evolveroutercoevolvingroutingprompt},
context-conditioned evidence use in
RAG~\citep{li2026incontextoptimizationretrievalaugmentedgeneration}, and
utility-based computation gating in DIAL~\citep{li2026same}.
These mechanisms adapt how an agent processes a task. \methodname focuses
on acquiring information from the user according to its incremental value
for the evolving task policy.

\paragraph{Counterfactual comparisons and confidence calibration.}
CAGE-CAL~\citep{huang2026counterfactual} calibrates multi-agent confidence
by comparing post-communication agent graphs with matched
no-communication graphs, accounting for dependencies induced by agent
communication. \methodname uses paired continuations to estimate the task
utility added by user clarification under the current policy.

\paragraph{Evaluation under imperfect communication.}
Benchmarks complement clarification methods by constructing tasks with
missing, ambiguous, or faulty information. IN3, ClarQ-LLM, and MediQ support
asking for additional information before committing to an
answer~\citep{in32024,clarqllm2024,mediq2024}; CLAMBER examines identifying
and clarifying ambiguous information needs~\citep{zhang2024clamber}.
Interactive information seeking and user-centric agent settings further
connect user exchanges to subsequent actions and task
outcomes~\citep{chatshop2024,userbench2025,userville2025}.
Perturbation-based generation extends evaluation to systematically
constructed misaligned or underspecified tasks~\citep{zhang2026oracle}.
Related work on reasoning and planning~\citep{xiong2026enhancing,xiong2025deliberate},
learning to ask under unclear instructions~\citep{wang2025learning}, and
rollout interventions~\citep{zhang2026optimizing} highlights the broader
need to connect uncertainty with effective task behavior.
This motivates evaluating both whether an agent obtains useful information
and whether it successfully acts on that information.

\paragraph{Specialized clarification and user assistance.}
SpeakRL studies the integration of reasoning, speaking, and
acting~\citep{speakrl2025}, while ClarEval evaluates clarification by code
agents under ambiguous instructions~\citep{clareval2026}.
HiL-Bench examines whether agents recognize when external human assistance
is needed~\citep{hilbench2026}.
These settings broaden the scope of interaction beyond answering an
ambiguous question, encompassing task execution and decisions to seek help.
Our evaluation measures task performance and clarification quality with
simulated users. Robustness to real user behavior and evolving goals
remains an important direction beyond the present study.

\Needspace{14\baselineskip}
\section{Prompts}
\label{app:prompts}

We illustrate the core interfaces with prompts for missing-information
tasks. Blue placeholders denote variable inputs. User simulation and
evaluation reuse the benchmark-provided prompts.

\begin{promptbox}{Target construction}
{\normalfont\small\bfseries\color{thmink}Missing-information targets}\par
List up to \promptslot{m} specific missing pieces of information that would
help complete the task in this conversation. Do not list information already
provided. One target per line; only the list, or NONE if nothing is missing.

\promptslot{visible conversation}
\end{promptbox}

\begin{promptbox}{Semantic-action selection}
\promptslot{visible conversation}

Choose the next high-level action. QUERY asks the user about the named
target. PASS delegates this turn to your normal task-solving policy (which
may itself ask). Copy exactly one line from this menu:\par
<QUERY> \promptslot{z\_1}\\
...\\
<PASS>
\end{promptbox}

\begin{promptbox}{Question realization}
{\normalfont\small\bfseries\color{thmink}Question template}\par
Could you tell me: \promptslot{z}?
\end{promptbox}

\end{document}